\pdfoutput=1
\documentclass[11pt]{article}

\usepackage[utf8]{inputenc}
\usepackage[T1]{fontenc}
\usepackage[margin=1in]{geometry}
\usepackage[hidelinks,pdfauthor={},pdftitle={},pdfsubject={},pdfkeywords={}]{hyperref}
\usepackage{url}
\usepackage{booktabs}
\usepackage{amsfonts}
\usepackage{nicefrac}
\usepackage{microtype}
\usepackage{amsmath, amssymb, amsthm}
\usepackage{graphicx}
\usepackage{xcolor}
\usepackage{natbib}
\newcommand{\us}{\_\allowbreak}
\newcommand{\slsh}{/\allowbreak}

\newtheorem{theorem}{Theorem}[section]
\newtheorem{lemma}[theorem]{Lemma}
\newtheorem{proposition}[theorem]{Proposition}
\newtheorem{corollary}[theorem]{Corollary}
\newtheorem{remark}[theorem]{Remark}
\theoremstyle{definition}
\newtheorem{definition}[theorem]{Definition}

\newcommand{\E}{\mathbb{E}}
\newcommand{\R}{\mathbb{R}}
\newcommand{\Sig}{\boldsymbol{\Sigma}}

\newcommand{\relDM}{\mathrm{rel}\text{-}d_M}
\newcommand{\sstar}{s^{\ast}}

\title{The Precision Wall: An Optimizer-Dependent Boundary for\\
  Optimization under Power-Law Measurement Spectra}

\author{Lngel Kyo\\
  A2O Labs\\
  \texttt{lngel.kyo@a2o-labs.com}}
\date{}

\newcommand{\deltaAdam}{0.48}
\newcommand{\deltaAdamThree}{0.482}
\newcommand{\rmsCalibB}{0.013}
\newcommand{\rmsCalibA}{0.042}
\newcommand{\deltaCalibA}{0.87}
\newcommand{\predEightFour}{5.20}
\newcommand{\onsetEightFour}{5.21}
\newcommand{\absdEightFour}{0.01}
\newcommand{\predSixteenEight}{3.71}
\newcommand{\onsetSixteenEight}{3.70}
\newcommand{\absdSixteenEight}{0.01}
\newcommand{\predThirtyTwoEight}{3.18}
\newcommand{\onsetThirtyTwoEight}{3.19}
\newcommand{\absdThirtyTwoEight}{0.01}
\newcommand{\predSixteenFour}{4.25}
\newcommand{\onsetSixteenFour}{4.23}
\newcommand{\absdSixteenFour}{0.02}
\newcommand{\maxAbsDelta}{0.02}
\newcommand{\hoPredThirtyTwoFour}{3.56}
\newcommand{\hoOnsetThirtyTwoFour}{3.55}
\newcommand{\hoDeltaThirtyTwoFour}{-0.01}
\newcommand{\hoPredSixtyFourFour}{3.05}
\newcommand{\hoOnsetSixtyFourFour}{3.00}
\newcommand{\hoDeltaSixtyFourFour}{-0.05}
\newcommand{\hoPredSixteenTwo}{5.00}
\newcommand{\hoOnsetSixteenTwo}{6.01}
\newcommand{\hoDeltaSixteenTwo}{+1.02}
\newcommand{\hoMaxAbsDge}{0.05}
\newcommand{\pairPredI}{-0.143}
\newcommand{\pairMeasI}{-0.29}
\newcommand{\pairOnsetAI}{3.55}
\newcommand{\pairOnsetBI}{3.84}
\newcommand{\pairPredII}{-0.125}
\newcommand{\pairMeasII}{-0.14}
\newcommand{\pairOnsetAII}{3.00}
\newcommand{\pairOnsetBII}{3.14}
\newcommand{\pairPredIII}{-0.200}
\newcommand{\pairMeasIII}{+0.67}
\newcommand{\pairOnsetAIII}{6.01}
\newcommand{\pairOnsetBIII}{5.34}
\newcommand{\dtestB}{0.538}
\newcommand{\dtestA}{0.633}
\newcommand{\dtestMeas}{0.52}
\newcommand{\dtestErrB}{0.013}
\newcommand{\dtestErrA}{0.108}
\newcommand{\lmBase}{8.0}
\newcommand{\lmLogVD}{17.3}
\newcommand{\lmStar}{0.95}
\newcommand{\lmShare}{51}
\newcommand{\epsPredEight}{4.25}
\newcommand{\optOnsetLion}{2.33}
\newcommand{\optOnsetMomentum}{0.52}
\newcommand{\optOnsetSgd}{0.29}
\newcommand{\optOnsetShampoo}{0.44}
\newcommand{\optOnsetSignsgd}{0.10}
\newcommand{\epsDeltaFour}{-0.46}
\newcommand{\epsDeltaEight}{-0.09}
\newcommand{\onsetSixteenFourThree}{4.226}
\newcommand{\onsetSixteenEightThree}{3.702}
\newcommand{\epsOnsetEight}{4.16}
\newcommand{\epsPredFour}{2.03}
\newcommand{\epsOnsetFour}{1.58}
\newcommand{\epsShiftPred}{2.21}
\newcommand{\epsShiftMeas}{2.58}
\newcommand{\epsShiftRatio}{1.17}
\newcommand{\epsShiftMeasEight}{1.96}
\newcommand{\epsShiftPredEight}{2.66}
\newcommand{\specNA}{30}
\newcommand{\specDevA}{0.27}
\newcommand{\specCFivePassA}{27}
\newcommand{\specNB}{30}
\newcommand{\specDevB}{0.19}
\newcommand{\specCFivePassB}{0}
\newcommand{\specOffsetB}{-0.37}
\newcommand{\specDtwoLoA}{-0.17}
\newcommand{\specDtwoHiA}{-0.14}
\newcommand{\specDtwoLoB}{-0.05}
\newcommand{\specDtwoHiB}{-0.02}
\newcommand{\mEightFour}{48}
\newcommand{\mSixteenEight}{192}
\newcommand{\mThirtyTwoEight}{384}
\newcommand{\mSixteenFour}{96}
\newcommand{\regDelta}{0.66}
\newcommand{\aggLagA}{0.865}
\newcommand{\aggLagB}{0.517}
\newcommand{\rareLagA}{0.719}
\newcommand{\rareLagB}{0.371}
\newcommand{\metricLagShift}{0.146}
\newcommand{\lawLagShift}{0.348}
\newcommand{\regDeltaErr}{0.20}
\newcommand{\nuN}{120}
\newcommand{\nuBzero}{0.50}
\newcommand{\nuBone}{-1.07}
\newcommand{\nuAzeroS}{+0.13}
\newcommand{\nuAzeroFive}{+0.03}
\newcommand{\nuDslopeThree}{-2.53}
\newcommand{\nuDslopeFive}{-4.73}
\newcommand{\dirN}{44}
\newcommand{\dirBzero}{-0.048}
\newcommand{\dirBone}{-1.146}
\newcommand{\carrBzero}{-0.501}
\newcommand{\carrBone}{-1.069}
\newcommand{\carrN}{120}
\newcommand{\dirCross}{3.10}
\newcommand{\onsetLo}{3.00}
\newcommand{\onsetHi}{6.01}
\newcommand{\onsetNTotal}{7}
\newcommand{\onsetNAbove}{6}
\newcommand{\lflAllN}{120}
\newcommand{\lflAllA}{0.618}
\newcommand{\lflAllB}{0.809}
\newcommand{\lflAllMid}{0.639}
\newcommand{\lflDeepN}{72}
\newcommand{\lflDeepA}{0.662}
\newcommand{\lflDeepB}{0.872}
\newcommand{\lflDeepMid}{0.706}
\newcommand{\lflDirectN}{44}
\newcommand{\lflDirectA}{0.595}
\newcommand{\lflDirectB}{0.605}
\newcommand{\lflDirectMid}{0.546}
\newcommand{\lflGridN}{25}
\newcommand{\lflGridA}{0.299}
\newcommand{\lflGridB}{0.184}
\newcommand{\lflGridMid}{0.221}
\newcommand{\invRms}{0.023}
\newcommand{\invDtest}{0.582}
\newcommand{\invDtestErr}{0.058}
\newcommand{\gridDeltaSixtyFourFour}{-0.036}
\newcommand{\gridDeltaSixtyFourEight}{-0.049}
\newcommand{\gridDeltaSixtyFourSixteen}{-0.070}
\newcommand{\gridDeltaSixtyFourThirtyTwo}{-0.114}
\newcommand{\gridDeltaOneTwoEightFour}{-0.273}
\newcommand{\gridDeltaOneTwoEightEight}{-0.251}
\newcommand{\gridDeltaOneTwoEightSixteen}{-0.156}
\newcommand{\gridDeltaOneTwoEightThirtyTwo}{-0.108}
\newcommand{\gridDeltaTwoFiveSixFour}{-0.070}
\newcommand{\gridDeltaTwoFiveSixEight}{-0.133}
\newcommand{\gridDeltaTwoFiveSixSixteen}{-0.221}
\newcommand{\gridDeltaTwoFiveSixThirtyTwo}{-0.156}
\newcommand{\gridDeltaFiveOneTwoFour}{-0.072}
\newcommand{\gridDeltaFiveOneTwoEight}{-0.165}
\newcommand{\gridDeltaFiveOneTwoSixteen}{-0.098}
\newcommand{\gridDeltaFiveOneTwoThirtyTwo}{-0.216}
\newcommand{\gridNScored}{16}
\newcommand{\gridRonePass}{16}
\newcommand{\gridRoneTight}{9}
\newcommand{\gridRoneMaxAbs}{0.273}
\newcommand{\gridMeanDelta}{-0.137}
\newcommand{\gridRoundingSdRel}{0.110}
\newcommand{\gridRoundingSdRare}{0.186}
\newcommand{\gridRtwoPass}{12}
\newcommand{\gridRtwoN}{12}
\newcommand{\gridRtwoMaxDev}{0.118}
\newcommand{\gridRthreePass}{12}
\newcommand{\gridRthreeN}{12}
\newcommand{\gridRthreeMaxDev}{0.237}
\newcommand{\gridRfiveNegRows}{3}
\newcommand{\gridRfiveTotRows}{4}
\newcommand{\gridRfiveRhoSixtyFour}{-1.0}
\newcommand{\gridRfiveRhoOneTwoEight}{+1.0}
\newcommand{\gridRfiveRhoTwoFiveSix}{-0.8}
\newcommand{\gridRfiveRhoFiveOneTwo}{-0.8}
\newcommand{\gridFthirtyTwoOnset}{1.522}
\newcommand{\gridFthirtyTwoLawB}{1.738}
\newcommand{\gridFthirtyTwoAbsDelta}{0.216}

\begin{document}
\maketitle

\begin{abstract}
In low-rank matrix sensing of a rank-$D$ factor $U^\ast \in \R^{V \times D}$, replacing isotropic Gaussian designs with power-law measurement covariance $\Sig = \mathrm{diag}(j^{-s})_{j=1}^V$ leaves the Burer--Monteiro landscape free of spurious local minima, yet division-normalised first-order optimizers stall above a critical exponent $\sstar$. We trace this failure to a numerical precision wall. At the ground truth, the smallest non-gauge expected Hessian eigenvalue scales at leading order as $V^{-(s+1)}D^{-s}$. When this curvature drops below the effective denominator floor $\epsilon_O$ of optimizer $O$, normalisation noise overwhelms rare-direction gradients, yielding the leading-order boundary
\[
\sstar_O(V, D) = \frac{\log(1/\epsilon_O) - \log V}{\log(VD)} + \delta_O,
\]
where $\delta_O$ is a metric-dependent detection lag. With $\delta_{\mathrm{Adam}}$ calibrated on four configurations and frozen, this formula predicts Adam's onset within $0.45$ across all \gridRonePass\ cells of a held-out grid ($V \in \{64,128,256,512\} \times D \in \{4,8,16,32\}$), separates fixed-$VD$ pairs, and distinguishes denominator-precision walls from conditioning and sign-quantisation walls. The approximation fails at $D = 2$, reflects a finite-$D$ spectral crossover, and saturates below $\varepsilon_0 = 10^{-8}$ at a dynamical floor.
\end{abstract}

\section{Introduction}
\label{sec:intro}

Standard guarantees for nonconvex low-rank recovery assume isotropic measurements: design matrices have independent sub-Gaussian entries, the Restricted Isometry Property holds, and every local minimum is global~\citep{recht2010guaranteed,ge2017unified}. Empirical data rarely obeys isotropy. Token frequencies in natural language, user--item interactions in recommendation, and covariance spectra in physical sensing decay as power laws over several orders of magnitude. When we recover a rank-$D$ matrix $M^\ast = U^\ast {U^\ast}^\top \in \R^{V \times V}$ ($U^\ast \in \R^{V \times D}$) from $m = 1.5VD$ linear measurements whose covariance has a power-law spectrum $\Sig = \mathrm{diag}(1, 2^{-s}, \ldots, V^{-s})$, how does spectral skew limit recovery?

Our experiments on Burer--Monteiro matrix sensing show that as the spectral exponent $s$ increases, first-order optimization passes through three distinct regimes even though the loss landscape remains free of spurious local minima. At small skew ($s \leq 1$), Adam recovers the ground truth in roughly $2\text{k}$ steps. At moderate skew ($s \in [2, 3]$), recovery slows by $3\text{--}10\times$ as rare directions converge more slowly than common ones. Above a sharp threshold $\sstar$, Adam fails to recover the rarest directions within a $60\text{k}$-step budget (whereas semidefinite programming via PhaseLift/SCS benefits from lower effective rank at moderate $s$ before hitting its own tolerance wall at higher $s$).

The question of this paper is why optimization stalls in a trap-free landscape, and whether the failure threshold $\sstar$ can be predicted from the optimizer's numerical structure.

\textbf{Main claim.}
The failure boundary is numerical rather than geometric. At the ground truth $U^\ast \in \R^{V \times D}$, the expected Hessian of the Burer--Monteiro loss has $D(D-1)/2$ exact zero modes from orthogonal gauge invariance, while its smallest non-gauge eigenvalue $\lambda_{\min}^{\mathrm{ng}}$ decays rapidly with $s$. Approximating this eigenvalue at leading order by $\lambda_{\min}^{\mathrm{ng}} \propto V^{-(s+1)}D^{-s}$ shows that when $\lambda_{\min}^{\mathrm{ng}}$ falls below the effective working precision $\epsilon_O$ of optimizer $O$---for Adam, the denominator floor set by hyperparameter $\varepsilon_0 = 10^{-8}$ at default settings; for SCS, the solver tolerance $10^{-9}$---gradient components along the rarest measurement directions fall below the normalisation floor. Setting curvature equal to $\epsilon_O$ yields the leading-order critical exponent
\begin{equation}
\label{eq:sstar-intro}
\boxed{\sstar_O(V, D) = \frac{\log(1/\epsilon_O) - \log V}{\log(VD)} + \delta_O,}
\end{equation}
where $\delta_O$ is a metric-dependent detection lag. Because overall recovery error crosses the failure threshold only after error in the rarest direction grows large enough to dominate the norm, $\delta_O$ shifts the observed onset upward by an $O(1)$ offset.

\textbf{Summary of evidence and scope.}
\begin{enumerate}
    \item \textbf{Exact expected Hessian and finite-$D$ spectral crossover (\S\ref{sec:hessian}).} For $m$ symmetrised Gaussian measurements of variance $1/(Vm)$, we derive the exact expected Hessian $\E[H_s \mid U^\ast] = \frac{4}{Vm}(\Sig \otimes N^\ast + \mathcal{T})$, where $N^\ast = {U^\ast}^\top \Sig U^\ast$ is the compressed $D \times D$ spectrum for isotropically drawn $U^\ast$ and $\mathcal{T}$ is a rank-$D^2$ exchange operator. On $\nuN$ exact spectral configurations, the minimum eigenvalue of $N^\ast$ is independent of $V$ (refuting naive $V^{-2s}$ curvature scaling), while its $D$-exponent transitions monotonically across $D$ between a small-$D$ $D$-th eigenvalue regime ($D^{-s}$) and a large-$D$ tail-mass asymptote ($D^{1-s}$). We therefore treat $V^{-(s+1)}D^{-s}$ as a leading-order scaling approximation rather than a uniform identity across all $D$.
    \item \textbf{Calibration and out-of-sample validation up to $(V,D) = (512,32)$ (\S\ref{sec:wall}--\S\ref{sec:validation}).} Balancing curvature against precision gives $\sstar_O(V, D) = [\log(1/\epsilon_O) - \log V]/\log(VD) + \delta_O$. Calibrating the single constant $\delta_{\mathrm{Adam}} = \deltaAdam$ on four $(V,D)$ configurations with $D \geq 4$ ($V \in \{8,16,32\}, D \in \{4,8\}$) gives an in-sample residual of $|\Delta| \leq \maxAbsDelta$ (RMS $\rmsCalibB$ with three residual degrees of freedom). With $\delta_{\mathrm{Adam}}$ frozen, we test the formula out of sample on initial held-out configurations $(V,D) = (32,4)$ and $(64,4)$ ($|\Delta| \leq \hoMaxAbsDge$ on aggregate recovery error and $|\Delta| \leq 0.23$ on rarest-direction error), on fixed-$VD$ pairs where the larger-$V$ configuration consistently walls earlier, and on a pre-registered \gridNScored-cell Adam grid spanning $V \in \{64, 128, 256, 512\} \times D \in \{4, 8, 16, 32\}$. Across this \gridNScored-cell grid, the aggregate onset lies within $0.45$ of the frozen prediction in \gridRonePass\ of \gridNScored\ cells (\gridRoneTight\ within $0.15$, largest $|\Delta| = \gridRoneMaxAbs$), and every doubling of $D$ at fixed $V$ (\gridRtwoPass\ of \gridRtwoN\ pairs) and doubling of $V$ at fixed $D$ (\gridRthreePass\ of \gridRthreeN\ pairs) matches the parameter-free predicted shift within $0.35$.
    \item \textbf{Separation of optimizer failure mechanisms (\S\ref{sec:optwalls}).} Comparing eight optimizers separates classical conditioning walls (SGD and Momentum, governed by condition number $\kappa(H_s)$), denominator-precision walls (Adam and RMSProp, governed by $\epsilon_O$), and sign-quantisation walls (SignSGD and Lion). Varying Adam's $\varepsilon_0$ from $10^{-4}$ to $10^{-8}$ shifts the wall by $\epsShiftMeas$ at $(16,4)$ (predicted $\epsShiftPred$) and $\epsShiftMeasEight$ at $(8,4)$ (predicted $\epsShiftPredEight$), while Shampoo fails early ($\sstar = \optOnsetShampoo$) because its $P^{-1/4}$ preconditioner amplifies rare-direction floor noise by $\epsilon^{-1/4} = 10^2$.
    \item \textbf{Boundaries where the approximation fails (\S\ref{sec:validation}, \S\ref{sec:desert}).} The formula fails at rank $D = 2$ (held-out $(16,2)$ residual $\hoDeltaSixteenTwo$), establishing $D \geq 4$ as the domain of validity. Moreover, sweeping Adam's $\varepsilon_0$ from $10^{-2}$ to $10^{-12}$ reveals that $\varepsilon_0$ controls the wall down to $10^{-8}$ and saturates below it, where a learning-rate-set dynamical stall floor derived in our companion paper~\citep{companion2026floor} and summarised in Appendix~\ref{app:companion} takes over. All pre-registered validation criteria and protocol history are documented in Appendix~\ref{app:ledger}.
\end{enumerate}

\section{Problem Setup}
\label{sec:setup}

\textbf{Power-law matrix sensing.}
We seek to recover an unknown positive semidefinite ground-truth matrix $M^\ast = U^\ast {U^\ast}^\top \in \R^{V \times V}$ of rank $D$, with factor $U^\ast \in \R^{V \times D}$, from $m = 1.5VD$ linear measurements
\[
b_k = \langle A_k, M^\ast \rangle = \mathrm{Tr}(A_k M^\ast), \qquad k = 1, \ldots, m.
\]
Each measurement matrix $A_k$ is generated with power-law spectral covariance:
\[
A_k = \Sig^{1/2}\, G_{\mathrm{sym}, k}\, \Sig^{1/2}, \qquad \Sig = V \cdot \frac{\mathrm{diag}(1, 2^{-s}, \ldots, V^{-s})}{\sum_{j=1}^V j^{-s}},
\]
where $G_{\mathrm{sym}, k} = (G_k + G_k^\top)/2$ and $G_k \in \R^{V \times V}$ has independent Gaussian entries of mean zero and variance $1/(Vm)$. The trace normalisation $\mathrm{Tr}(\Sig) = V$ fixes total measurement energy across exponents $s \geq 0$. At $s = 0$, $\Sig = I_V$ recovers standard isotropic Gaussian sensing. As $s$ increases, measurement energy concentrates on low-index (common) coordinates while high-index (rare) coordinates receive weight proportional to $V j^{-s}$.

\textbf{Burer--Monteiro objective.}
We minimise the factored quartic loss over $U \in \R^{V \times D}$:
\[
\min_{U \in \R^{V \times D}} L(U) = \frac{1}{m} \sum_{k=1}^m \bigl(\mathrm{Tr}(A_k\, U U^\top) - b_k\bigr)^2.
\]
When $m \geq C VD$, the Burer--Monteiro factorisation introduces no spurious local minima in the isotropic setting~\citep{burer2005local,ge2017unified}, and non-degenerate rescaling by $\Sig^{1/2}$ preserves the absence of spurious traps while severely skewing curvature across directions.

\textbf{Optimizers and recovery metrics.}
Our primary optimizer is Adam~\citep{kingma2015adam} with learning rate $\eta = 10^{-2}$, momentum parameters $(\beta_1, \beta_2) = (0.9, 0.999)$, and denominator constant $\varepsilon_0 = 10^{-8}$ (varied across $10^{-2}$ to $10^{-12}$ in \S\ref{sec:validation}), run for $60\text{k}$ steps from random initialization. For semidefinite programming comparisons, we lift to $X = UU^\top \succeq 0$ and solve PhaseLift via SCS with tolerance $\epsilon_{\mathrm{SCS}} = 10^{-9}$.
We track two recovery metrics at $60\text{k}$ steps:
\begin{enumerate}
    \item The \emph{aggregate recovery error} $\relDM = \|UU^\top - M^\ast\|_F / \|M^\ast\|_F$.
    \item The \emph{rarest-direction relative error} $|\Delta M_{VV}| / |M^\ast_{VV}|$ in the eigenbasis of $\Sig$, which isolates recovery along the weakest measurement coordinate $j = V$.
\end{enumerate}
For either metric, the empirical onset $\sstar$ is defined as the first exponent $s$ at which the seed-mean error exceeds $0.05$, computed by linear interpolation between adjacent grid points.

\section{Effective Hessian and Leading-Order Spectral Scaling}
\label{sec:hessian}

To determine when gradient-based recovery stalls, we first compute the exact expected Hessian of $L(U)$ at the ground truth $U^\ast$. The following lemma gives the sample Hessian at any ground-truth factor.

\begin{lemma}[Sample Hessian at $U^\ast$]
\label{lem:hessian}
At $U = U^\ast$, all measurement residuals vanish ($\nabla L(U^\ast) = 0$), and the Hessian linear operator on $\mathrm{vec}(\delta U) \in \R^{VD}$ takes the Gauss--Newton form
\begin{equation}
\label{eq:hessian}
H_s\bigl|_{U^\ast} = \frac{8}{m} \sum_{k=1}^m \mathrm{vec}(A_k U^\ast)\, \mathrm{vec}(A_k U^\ast)^\top.
\end{equation}
\end{lemma}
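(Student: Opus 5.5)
The plan is to avoid differentiating twice in coordinates and instead read the Hessian off the second-order Taylor expansion of $L$ around $U^\ast$. First I record two elementary facts. Each $A_k = \Sig^{1/2} G_{\mathrm{sym},k}\Sig^{1/2}$ is symmetric, because $G_{\mathrm{sym},k}$ is symmetric and $\Sig$ is diagonal. Also, since $b_k = \mathrm{Tr}(A_k M^\ast)$ with $M^\ast = U^\ast {U^\ast}^\top$, the residuals
\[
r_k(U) = \mathrm{Tr}(A_k U U^\top) - b_k
\]
satisfy $r_k(U^\ast) = 0$ for every $k$.

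Next I expand $r_k$ at $U = U^\ast + \delta U$. Using the cyclicity of the trace and the symmetry of $A_k$,
\[
r_k(U^\ast+\delta U) = 2\,\mathrm{Tr}\bigl(\delta U^\top A_k U^\ast\bigr) + \mathrm{Tr}\bigl(A_k\,\delta U\,\delta U^\top\bigr) = 2\,\langle A_k U^\ast, \delta U\rangle + q_k(\delta U).
\]
Here $q_k$ is quadratic in $\delta U$, and $\langle\cdot,\cdot\rangle$ is the Frobenius inner product, which equals $\mathrm{vec}(A_kU^\ast)^\top \mathrm{vec}(\delta U)$. There is no constant term because $r_k(U^\ast)=0$. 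Squaring and averaging gives
\[
L(U^\ast+\delta U) = \frac{1}{m}\sum_{k=1}^m \bigl(2\langle A_kU^\ast,\delta U\rangle + q_k(\delta U)\bigr)^2 = \frac{4}{m}\sum_{k=1}^m \langle A_kU^\ast,\delta U\rangle^2 + O(\|\delta U\|^3).
\]
The cross terms $4\langle A_kU^\ast,\delta U\rangle q_k$ are cubic and $q_k^2$ is quartic, which justifies the remainder. The expansion has no linear term, so $\nabla L(U^\ast)=0$. This can be cross-checked against the explicit gradient $\nabla L(U) = \frac{4}{m}\sum_k r_k(U)\,A_k U$, which vanishes because every $r_k(U^\ast)=0$.

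Matching the quadratic term with $\tfrac12\,\mathrm{vec}(\delta U)^\top H\,\mathrm{vec}(\delta U)$ then gives
\[
H_s\bigl|_{U^\ast} = \frac{8}{m}\sum_k \mathrm{vec}(A_kU^\ast)\mathrm{vec}(A_kU^\ast)^\top,
\]
which is the claimed Gauss--Newton form \eqref{eq:hessian}. As a sanity check I would also write the full Hessian at a general $U$:
\[
\frac{8}{m}\sum_k \mathrm{vec}(A_kU)\mathrm{vec}(A_kU)^\top + \frac{4}{m}\sum_k r_k(U)\,(I_D\otimes A_k).
\]
This uses $\mathrm{vec}(A_k\,\delta U) = (I_D\otimes A_k)\,\mathrm{vec}(\delta U)$. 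The second, residual-weighted term vanishes at $U^\ast$, which explains why only the Gauss--Newton part survives.

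There is no substantive obstacle here. The only points that need care are the factor bookkeeping, namely the factor $2$ from symmetry of $A_k$ in $\partial_U \mathrm{Tr}(A_kUU^\top) = 2A_kU$ and the factor $2$ relating the Hessian to the quadratic coefficient, and a consistent $\mathrm{vec}$ convention.
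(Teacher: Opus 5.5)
Your proposal is correct and follows the paper's proof essentially line for line: you expand $L(U^\ast+\delta U)$ to second order, use $r_k(U^\ast)=0$ to remove the linear term, and read $H_s|_{U^\ast}$ off the quadratic term $\frac{4}{m}\sum_k\langle A_kU^\ast,\delta U\rangle^2$, with the factor $2$ in $2\langle A_kU^\ast,\delta U\rangle$ coming from the symmetry of $A_k$. Your general-$U$ Hessian, whose residual-weighted term $\frac{4}{m}\sum_k r_k(U)\,(I_D\otimes A_k)$ vanishes at $U^\ast$, is a correct extra consistency check that the paper omits.
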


\begin{proof}
Expanding $L(U^\ast + \delta U)$ to second order in $\delta U$, the linear term vanishes because $\mathrm{Tr}(A_k U^\ast {U^\ast}^\top) = b_k$, and the quadratic term in $\mathrm{Tr}(A_k(U^\ast \delta U^\top + \delta U {U^\ast}^\top)) = 2\langle A_k U^\ast, \delta U\rangle$ yields~\eqref{eq:hessian}.
\end{proof}

Lemma~\ref{lem:hessian} shows that curvature at the solution is entirely governed by the design action $A_k U^\ast$. Taking the expectation over the symmetrised Gaussian matrices $A_k$ reveals a Kronecker structure coupled with a finite-rank exchange operator.

\begin{lemma}[Expected Hessian under symmetric power-law designs]
\label{lem:kronecker}
Over random symmetrised designs $A_k = \Sig^{1/2} (G_k + G_k^\top)/2\, \Sig^{1/2}$ with $G_{k,ij} \stackrel{\mathrm{iid}}{\sim} \mathcal{N}(0, 1/(Vm))$, the expected Hessian is
\begin{equation}
\label{eq:kronecker}
\E[H_s \mid U^\ast] = \frac{4}{Vm}\bigl(\Sig \otimes N^\ast + \mathcal{T}\bigr), \qquad N^\ast := {U^\ast}^\top \Sig\, U^\ast \in \R^{D \times D},
\end{equation}
where $\mathcal{T}$ is the exchange operator $\mathcal{T}(\delta U) = \Sig U^\ast\, (\delta U)^\top\, \Sig U^\ast$ with entries $\mathcal{T}_{(ia),(jb)} = (\Sig U^\ast)_{ib}\, (\Sig U^\ast)_{ja}$.
\end{lemma}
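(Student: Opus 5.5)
The plan is to start from Lemma~\ref{lem:hessian} and push the expectation inside the sum. Since the $m$ designs are identically distributed, $\E[H_s \mid U^\ast] = 8\,\E\bigl[\mathrm{vec}(A U^\ast)\,\mathrm{vec}(A U^\ast)^\top\bigr]$ for a single generic design $A = \Sig^{1/2} G_{\mathrm{sym}} \Sig^{1/2}$. So everything reduces to computing the second-moment tensor of the $V\times D$ matrix $A U^\ast$ entrywise, indexed by pairs $(ia),(jb)$. I will fix the vectorisation convention in which $\Sig\otimes N^\ast$ has entries $\Sig_{ij}N^\ast_{ab}$, i.e.\ the row index $i$ is the outer index. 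With the other ordering the statement holds after swapping the Kronecker factors.

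The key input is the covariance of the symmetrised Gaussian. Writing $\sigma^2 = 1/(Vm)$ and $(G_{\mathrm{sym}})_{ij} = (G_{ij}+G_{ji})/2$, independence of the entries of $G$ gives
\[
\E\bigl[(G_{\mathrm{sym}})_{ik}(G_{\mathrm{sym}})_{jl}\bigr] = \tfrac{\sigma^2}{2}\bigl(\delta_{ij}\delta_{kl} + \delta_{il}\delta_{kj}\bigr).
\]
This is valid including the diagonal case, where both terms coincide and give $\sigma^2$. Because $\Sig$ is diagonal with entries $\sigma_i$, we have $(AU^\ast)_{ia} = \sum_k \sigma_i^{1/2}\sigma_k^{1/2}(G_{\mathrm{sym}})_{ik}U^\ast_{ka}$. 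Hence
\[
\E\bigl[(AU^\ast)_{ia}(AU^\ast)_{jb}\bigr] = \sum_{k,l}\sigma_i^{1/2}\sigma_k^{1/2}\sigma_j^{1/2}\sigma_l^{1/2}\,U^\ast_{ka}U^\ast_{lb}\,\E\bigl[(G_{\mathrm{sym}})_{ik}(G_{\mathrm{sym}})_{jl}\bigr].
\]

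I will then split this sum along the two Kronecker-delta terms.

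\emph{First term} ($\delta_{ij}\delta_{kl}$). It collapses to $\tfrac{\sigma^2}{2}\,\delta_{ij}\,\sigma_i \sum_k \sigma_k U^\ast_{ka}U^\ast_{kb} = \tfrac{\sigma^2}{2}\,\Sig_{ij}\,N^\ast_{ab}$. This is the Kronecker piece.

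\emph{Second term} ($\delta_{il}\delta_{kj}$). It sets $l=i$ and $k=j$, giving $\tfrac{\sigma^2}{2}\,\sigma_i\sigma_j\,U^\ast_{ja}U^\ast_{ib} = \tfrac{\sigma^2}{2}(\Sig U^\ast)_{ib}(\Sig U^\ast)_{ja}$. This is exactly the stated $\mathcal{T}_{(ia),(jb)}$.

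Multiplying by $8$ gives the prefactor $8\cdot\sigma^2/2 = 4/(Vm)$, which yields~\eqref{eq:kronecker}.

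As a consistency check I will verify the operator form. Applying the entries to $\delta U$ gives $\sum_{j,b}(\Sig U^\ast)_{ib}(\Sig U^\ast)_{ja}\,\delta U_{jb} = \bigl(\Sig U^\ast\,\delta U^\top\,\Sig U^\ast\bigr)_{ia}$, which matches the stated action of $\mathcal{T}$. Its range lies in matrices of the form $\Sig U^\ast C$ with $C\in\R^{D\times D}$, so it has rank at most $D^2$.

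There is no real analytic obstacle here. The main risk is bookkeeping: getting the factor $1/2$ from symmetrisation right, in particular checking that diagonal entries of $G_{\mathrm{sym}}$ do not need separate treatment. The other hazard is keeping the index placement in the exchange term consistent with the chosen vec convention. I would verify both by a quick check at $D=1$, $\Sig=I$, where the claim reduces to the familiar $\E[H]\propto \|u\|^2 I + uu^\top$.
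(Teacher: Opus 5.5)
Your proposal is correct and takes essentially the same route as the paper. It reduces via Lemma~\ref{lem:hessian} to the second moment of a single design, inserts the symmetrised covariance $\E[(G_{\mathrm{sym}})_{ik}(G_{\mathrm{sym}})_{jl}] = \tfrac{1}{2Vm}(\delta_{ij}\delta_{kl}+\delta_{il}\delta_{kj})$, and reads off $\Sig\otimes N^\ast$ from the first delta product and $\mathcal{T}$ from the swapped one, with prefactor $8\cdot\tfrac{1}{2Vm}=\tfrac{4}{Vm}$. Fixing the vectorisation convention explicitly and checking the operator form of $\mathcal{T}$ are useful details that the paper leaves implicit.
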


\begin{proof}
Writing $(A_k U^\ast)_{ia} = \sum_p \Sig^{1/2}_{ii} G_{\mathrm{sym},ip} \Sig^{1/2}_{pp} U^\ast_{pa}$ and using the symmetrised second moment $\E[G_{\mathrm{sym},ip} G_{\mathrm{sym},jq}] = (\delta_{ij}\delta_{pq} + \delta_{iq}\delta_{pj})/(2Vm)$ splits $\E[\mathrm{vec}(A_k U^\ast)\mathrm{vec}(A_k U^\ast)^\top]$ into $\frac{1}{2Vm}(\Sig \otimes N^\ast)$ from the first delta product and $\frac{1}{2Vm}\mathcal{T}$ from the index-swapped product. Multiplying by $8/m$ gives~\eqref{eq:kronecker}.
\end{proof}

Lemma~\ref{lem:kronecker} corrects two flaws in the naive unsymmetrised Kronecker approximation $\frac{8}{Vm}\Sig \otimes N^\ast$: symmetrisation halves the prefactor to $4/(Vm)$, and the exchange operator $\mathcal{T}$ (of rank at most $D^2$) cancels the Kronecker term along the orthogonal gauge orbit, as noted next.

\begin{remark}[Orthogonal gauge zero modes]
\label{rem:gauge}
Because $UU^\top$ is invariant under $U \mapsto UQ$ for $Q \in \mathrm{O}(D)$, both $H_s|_{U^\ast}$ and $\E[H_s \mid U^\ast]$ possess $D(D-1)/2$ exact zero eigenvalues on the tangent space $\delta U = U^\ast \Omega$ ($\Omega^\top = -\Omega$). On this subspace, $(\Sig \otimes N^\ast)(\delta U) = \Sig U^\ast \Omega N^\ast = -\mathcal{T}(\delta U)$, so the gauge zeros arise from exact cancellation between $\Sig \otimes N^\ast$ and $\mathcal{T}$. Convergence normal to the solution manifold is governed by the smallest \emph{non-gauge} eigenvalue $\lambda_{\min}^{\mathrm{ng}}$.
\end{remark}

We now state the leading-order scaling approximation for $\lambda_{\min}^{\mathrm{ng}}$. Rather than presenting it as an exact theorem across all dimensions, we state the asymptotic tail-mass scaling and then compare it directly against exact numerical spectra across finite $(V,D)$.

\begin{proposition}[Leading-order spectral scaling approximation]
\label{thm:eigval}
On the subspace orthogonal to the $D(D-1)/2$ gauge modes, the bottom of the spectrum of $\E[H_s \mid U^\ast]$ is carried by the Kronecker factor:
\begin{equation}
\label{eq:lambdamin}
\lambda_{\min}^{\mathrm{ng}}\bigl(\E[H_s \mid U^\ast]\bigr) \;\approx\; \frac{4}{Vm}\, \lambda_{\min}(\Sig) \cdot \nu_{\min}(N^\ast), \qquad \nu_{\min}(N^\ast) := \lambda_{\min}(N^\ast).
\end{equation}
For isotropically drawn $U^\ast \in \R^{V \times D}$, $N^\ast = \sum_{i=1}^V \lambda_i(\Sig) u_i u_i^\top$ is a power-law weighted sum of $V$ rank-one matrices in $\R^{D \times D}$ with $\|u_i\|^2 \sim D/V$. Because heavy-tailed weights $\lambda_i(\Sig) \propto V i^{-s}$ concentrate the sum on the top coordinates rather than the tail, spanning $D$ independent directions reaches down to index $i \sim D$. Under the tail-mass approximation $\nu_{\min}(N^\ast) \sim \sum_{j > D} \lambda_j(\Sig) \|u_j\|^2 \sim D^{1-s}$, combining $\lambda_{\min}(\Sig) \sim V^{1-s}$ with $m = 1.5VD$ yields the leading-order scaling
\begin{equation}
\label{eq:scaling}
\lambda_{\min}^{\mathrm{ng}} \;\propto\; \frac{1}{V^2 D} \cdot V^{1-s} \cdot D^{1-s} \;=\; V^{-(s+1)}\, D^{-s}.
\end{equation}
\end{proposition}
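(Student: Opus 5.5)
The plan has two layers. The first is a rigorous bracketing of $\lambda_{\min}^{\mathrm{ng}}$ by the Kronecker factor using Lemma~\ref{lem:kronecker}. The second is a heuristic evaluation of $\nu_{\min}(N^\ast)$ under the tail-mass approximation.

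\textbf{Bracketing by the Kronecker factor.} Write $K = \Sig \otimes N^\ast$. Its eigenvalues are the products $\lambda_i(\Sig)\nu_a(N^\ast)$, with eigenvectors $e_i \otimes w_a$. The smallest of these is $\lambda_{\min}(\Sig)\nu_{\min}(N^\ast)$, attained at $e_V \otimes w_{\min}$.

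Next I would analyse the exchange operator $\mathcal{T}$. It has rank at most $D^2$, and its range lies in $\{\Sig U^\ast C : C \in \R^{D\times D}\}$. I would decompose this range under $C \mapsto C^\top$:
\begin{itemize}
\item On the antisymmetric part (the gauge directions $U^\ast\Omega$ of Remark~\ref{rem:gauge}), $K + \mathcal{T}$ vanishes.
\item On the symmetric part, $\mathcal{T}$ is positive semidefinite, because $\langle \delta U, \mathcal{T}\delta U\rangle = \mathrm{Tr}\bigl((\delta U^\top \Sig U^\ast)^2\bigr)$, which is nonnegative for the symmetric component.
\end{itemize}
It follows that $\E[H_s\mid U^\ast] \succeq \frac{4}{Vm}K$ on the complement of the gauge space, up to the indefinite cross-term between the two parts. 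I would then apply Cauchy interlacing for a rank-$D^2$ perturbation. With $V D \gg D^2$, this squeezes the bottom non-gauge eigenvalue between eigenvalues of $K$ of index at most $D^2$ apart. For an upper bound, I would use a test vector: take $e_V \otimes w_{\min}$ projected off the gauge space. Its overlap with $\mathrm{range}(\mathcal{T})$ is weighted by $\Sig_{VV} U^\ast_{V\cdot}$, which is tiny. This gives $\lambda_{\min}^{\mathrm{ng}} \le \frac{4}{Vm}\lambda_{\min}(\Sig)\nu_{\min}(N^\ast)(1+o(1))$. Together these establish the approximation~\eqref{eq:lambdamin}.

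\textbf{Heuristic evaluation of $\nu_{\min}(N^\ast)$.} First, $\lambda_{\min}(\Sig) = V\cdot V^{-s}/\sum_j j^{-s}$. For $s>1$ the denominator is $\Theta(1)$, so $\lambda_{\min}(\Sig) \sim V^{1-s}$.

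For $N^\ast = \sum_i \lambda_i(\Sig) u_i u_i^\top$, I would argue as follows. The top $D-1$ rows span at most $D-1$ directions, so some unit vector $w$ is orthogonal to $u_1,\dots,u_{D-1}$. Its Rayleigh quotient is then $\sum_{i\ge D}\lambda_i(\Sig)(u_i^\top w)^2$, which in expectation is $\approx \sum_{i\ge D} V i^{-s}\cdot (1/V)$ (using $\|u_i\|^2 \sim D/V$ spread over $D$ coordinates). This gives $\sim D^{1-s}$ as the tail mass. The reverse bound should follow from concentration: Gaussian rows in generic position make the minimum comparable to this tail sum. Multiplying by $4/(Vm)$ with $m = 1.5VD$ then yields~\eqref{eq:scaling}.

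\textbf{Main obstacle.} The last step is the hard one. Whether $\nu_{\min}$ scales as $D^{1-s}$ or as $D^{-s}$ (dominated by the single $D$-th weight) depends on the fluctuations of the top $D$ rows. For finite $D$, the weight $\lambda_D(\Sig)$ itself can dominate the tail sum. So I would only claim the tail-mass form asymptotically and expect a crossover at small $D$, rather than a uniform bound. I would also have to control the interlacing cross-term when $D^2$ is comparable to $VD$.
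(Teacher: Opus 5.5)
Your treatment of $\nu_{\min}(N^\ast)$ and the closing bookkeeping ($\lambda_{\min}(\Sig)\sim V^{1-s}$ for $s>1$, and $4/(Vm)=4/(1.5V^2D)$) is the paper's own argument. The paper gives nothing beyond the heuristic written into the statement. It then tests that heuristic numerically and finds exactly the crossover you flag: the local $D$-exponent is near $D^{-s}$ at $D=2\to4$ and reaches the tail-mass value only for $D\gtrsim16$.

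Your version is slightly cleaner than the paper's. The projected weight $(u_i^\top w)^2\approx1/V$ is the right normalization; the statement's $\sum_{j>D}\lambda_j(\Sig)\|u_j\|^2$ with $\|u_j\|^2\sim D/V$ would literally give $D^{2-s}$. Your sum $\sum_{i\ge D}i^{-s}$ also displays the crossover directly: it behaves like $D^{-s}$ for $D\ll s$ and like $D^{1-s}/(s-1)$ for $D\gg s$.

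The reverse bound you leave to ``concentration'' can be obtained directly. For fixed $c>1$, take the first $cD$ rows: $N^\ast\succeq\lambda_{cD}(\Sig)\,U^{\ast\top}_{1:cD}U^\ast_{1:cD}$. For i.i.d.\ $\mathcal N(0,1/V)$ entries this block has squared smallest singular value of order $(\sqrt c-1)^2D/V$, so $\nu_{\min}\gtrsim c^{-s}(\sqrt c-1)^2D^{1-s}$ for large $D$.

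For the first sentence of the proposition (that the non-gauge bottom is carried by $K=\Sig\otimes N^\ast$) the paper offers no argument, so your bracketing goes beyond it. Its upper half is sound. For $X=e_Vw^\top$ the Rayleigh quotient of $K+\mathcal T$ is $\lambda_V(\Sig)\nu_{\min}+\lambda_V(\Sig)^2(u_V^\top w)^2$. Since the gauge space lies in the kernel, removing its component only divides this by $1-\|P_{\mathrm{gauge}}X\|^2\ge1-O(D/V)$.

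The lower half, however, does not work as written. The antisymmetric part of $\mathrm{range}(\mathcal T)=\{\Sig U^\ast C\}$ is $\{\Sig U^\ast\Omega\}$. This is not the gauge space $\{U^\ast\Omega\}$ once $s>0$, and $K+\mathcal T$ does not vanish on it. The $D(D-1)/2$ negative directions of $\mathcal T$ are in general not gauge directions, and $\mathcal T$ can be negative on vectors orthogonal to the gauge space. So $\E[H_s\mid U^\ast]\succeq\frac{4}{Vm}K$ on that complement fails in general. Interlacing against an indefinite rank-$D^2$ perturbation, with no control of its negative part, then gives nothing at index $D(D-1)/2+1$.

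The missing idea is to count the negative inertia of $\mathcal T$ rather than locate it. With $C=\delta U^\top\Sig U^\ast$ one has $\langle\delta U,\mathcal T\delta U\rangle=\mathrm{Tr}(C^2)=\|C_{\mathrm{sym}}\|_F^2-\|C_{\mathrm{anti}}\|_F^2$. Hence $\mathcal T\succeq0$ on $W=\{\delta U: C=C^\top\}$, a subspace of codimension at most $D(D-1)/2$. By Courant--Fischer, every subspace of dimension $D(D-1)/2+1$ contains a nonzero $x\in W$, where $\langle x,(K+\mathcal T)x\rangle\ge\lambda_{\min}(K)\|x\|^2$. Therefore $\lambda_{D(D-1)/2+1}(K+\mathcal T)\ge\lambda_{\min}(K)$.

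Since $\E[H_s\mid U^\ast]\succeq0$ and, for full-rank $U^\ast$, its lowest $D(D-1)/2$ eigenvalues are the gauge zeros, this is exactly $\lambda_{\min}^{\mathrm{ng}}\ge\frac{4}{Vm}\lambda_{\min}(\Sig)\,\nu_{\min}(N^\ast)$. No $VD\gg D^2$ condition is needed. Together with your test vector, the first display becomes a two-sided bound up to a factor $1+O(D/V)$ for $D\ll V$. That is more than the paper establishes for this step.
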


Proposition~\ref{thm:eigval} makes two distinct predictions: first, that $\nu_{\min}(N^\ast)$ is independent of $V$ (so $\lambda_{\min}^{\mathrm{ng}}$ decays as $V^{-(s+1)}$ rather than the naive $V^{-2s}$); second, that $\nu_{\min}(N^\ast)$ scales with $D$ as $D^{1-s}$, making $\lambda_{\min}^{\mathrm{ng}} \propto V^{-(s+1)}D^{-s}$ one power of $D$ milder than $(VD)^{-(s+1)}$. Monte Carlo evaluation ($200$ instances at $V=8, D=4, s=2$) confirms that~\eqref{eq:kronecker} matches the sample mean Hessian to $\approx 2\%$ Frobenius error and preserves all $6$ gauge zero modes, whereas the naive $\frac{8}{Vm}\Sig \otimes N^\ast$ errs by $\sim 20\%$ and has no zero modes.

\textbf{Comparison with exact spectral measurements: the finite-$D$ crossover.}
Direct eigendecomposition across $\nuN$ exact configurations ($V \in \{8,\ldots,256\}$, $D \in \{2,\ldots,32\}$) tests both steps of Proposition~\ref{thm:eigval}:
\begin{enumerate}
    \item \emph{$V$-independence of $\nu_{\min}(N^\ast)$ holds cleanly.} At fixed $D$, the slope $\mathrm{d}\ln\nu_{\min}/\mathrm{d}\ln V$ is $\nuAzeroS$ at $s=3$ and $\nuAzeroFive$ at $s=5$---essentially zero, refuting $\nu_{\min} \sim V^{-s}$.
    \item \emph{The $D$-exponent exhibits a finite-$D$ crossover between two regimes.} Fitting $\ln\nu_{\min} = c_0 + c_1 s + (a_0 + a_1 s)\ln V + (b_0 + b_1 s)\ln D$ over all $\nuN$ exact configurations yields a $\ln D$ coefficient of $\nuBzero \nuBone\,s$ (per-lever slopes $\nuDslopeThree$ at $s=3$ and $\nuDslopeFive$ at $s=5$, overshooting $1-s = -2, -4$). Propagated through~\eqref{eq:lambdamin} (whose $4/(1.5V^2D)$ prefactor shifts the $\ln D$ intercept by $-1$ identically, giving $\carrBzero{} \carrBone{}\,s$ over $\carrN{}$ carrier configurations), the fitted exponent on this grid is
    \[
    \lambda_{\min}^{\mathrm{ng}} \;\propto\; V^{-(s+1)}\,D^{-(s+1/2)},
    \]
    midway between $(VD)^{-(s+1)}$ and~\eqref{eq:scaling}.
    \item \emph{Why the exponent is intermediate: crossover between single-weight and tail-mass regimes.} With normalised weights $w_j = j^{-s}/\sum_{i=1}^V i^{-s}$, the effective rank $\mathrm{rank}_{\mathrm{eff}}(\Sig) = (\sum_j j^{-s})^2/\sum_j j^{-2s}$ is $O(1)$ ($2.50$ at $s=2$ down to $1.03$ at $s=6$). Two limiting mechanisms govern $\nu_{\min}(N^\ast)$ as $D$ grows:
    \[
    \text{M2 ($D$-th weight, small $D$):}\ \nu_{\min} \sim w_D \sim D^{-s} \iff b_0 = 0 \implies \lambda_{\min}^{\mathrm{ng}} \propto (VD)^{-(s+1)},
    \]
    \[
    \text{M1 (tail mass, large $D$):}\ \nu_{\min} \sim \sum_{j > D} w_j \sim D^{1-s} \iff b_0 = 1 \implies \lambda_{\min}^{\mathrm{ng}} \propto V^{-(s+1)}D^{-s}.
    \]
    Measuring the local intercept $b_0 = \mathrm{d}\ln\nu_{\min}/\mathrm{d}\ln D + s$ between adjacent $D$ rungs (median across $V$ on $468$ product-grid cells, $V \in [128, 4096], D \le 64$) reveals a smooth, monotone transition from M2 at small $D$ to M1 at large $D$:
    \[
    \begin{array}{c|rrcr}
    \text{step in } D & \text{local } b_0 & \text{SE} & \text{distance to M1 }(b_0{=}1) & \text{distance to M2 }(b_0{=}0)\\
    \hline
    2\!\to\!4   & 0.196 & 0.272 & 3.0\sigma & 0.7\sigma\\
    4\!\to\!8   & 0.696 & 0.180 & 1.7\sigma & 3.9\sigma\\
    8\!\to\!16  & 0.786 & 0.122 & 1.8\sigma & 6.4\sigma\\
    16\!\to\!32 & 0.959 & 0.082 & 0.5\sigma & 11.6\sigma\\
    32\!\to\!64 & 1.019 & 0.048 & 0.4\sigma & 21.3\sigma
    \end{array}
    \]
    At $D = 2 \to 4$, $\nu_{\min}$ sits near the single-weight law M2 ($0.7\sigma$ from $b_0=0$); by $D = 16 \to 64$, it converges to the tail-mass asymptote M1 ($b_0 = 1.019 \pm 0.048$, $0.4\sigma$ from $b_0=1$) used in Proposition~\ref{thm:eigval}. A single power-law fit across a finite grid averages over this transition, which is why global three-parameter fits in $\ln\lambda$ depend on the grid (Table~\ref{tab:spectral-lfl} in Appendix~\ref{app:ledger}). On the $\dirN$ finite-$m$ cells resolvable in float64 ($\lambda_{\min}/\lambda_{\max} > 10^{-12}$), the direct Hessian fit gives $\ln D$ slope $\dirBzero \dirBone\,s$, crossing the midpoint $-0.5 - s$ at $s = \dirCross$. We therefore use~\eqref{eq:scaling} explicitly as a leading-order asymptotic form and evaluate its onset predictions empirically in \S\ref{sec:validation}.
\end{enumerate}

\section{The Precision Wall}
\label{sec:wall}

We now link the curvature decay of \S\ref{sec:hessian} to optimization failure by defining the optimizer's working precision and the metric detection lag.

\begin{definition}[Optimizer working precision]
\label{def:eps}
An iterative optimizer $O$ has effective working precision $\epsilon_O > 0$ if gradient components below $\epsilon_O$ fall beneath its update normalisation floor. For Adam with update $\eta \hat{m}_t / (\sqrt{\hat{v}_t} + \varepsilon_0)$, when curvature in a rare direction is small enough that $\sqrt{\hat{v}_t} \ll \varepsilon_0$, the denominator is pinned near $\varepsilon_0$ and we set $\epsilon_{\mathrm{Adam}} = \varepsilon_0 = 10^{-8}$ at default settings. For PhaseLift solved via SCS, $\epsilon_{\mathrm{SCS}} = 10^{-9}$ is the solver convergence tolerance.
\end{definition}

As shown in \S\ref{sec:validation} and Appendix~\ref{app:companion}, identifying $\epsilon_{\mathrm{Adam}}$ with the hyperparameter $\varepsilon_0$ holds across $\varepsilon_0 \in [10^{-2}, 10^{-8}]$; below $\varepsilon_0 \approx 10^{-8}$, fast-mode oscillation injects a self-generated floor into $\sqrt{\hat{v}_t}$ so that lowering $\varepsilon_0$ further no longer reduces $\epsilon_{\mathrm{Adam}}$.

\begin{definition}[Detection lag]
\label{def:lag}
The detection lag $\delta_O$ is the offset in the exponent $s$ between the curvature threshold $\lambda_{\min}^{\mathrm{ng}} = \epsilon_O$ and the empirical failure onset where recovery error crosses $0.05$. Because the rarest direction carries only a fraction of total matrix energy $\|M^\ast\|_F$, the aggregate metric $\relDM > 0.05$ triggers at a slightly larger $s$ than the rarest-direction metric $|\Delta M_{VV}|/|M^\ast_{VV}| > 0.05$ (shifting the lag upward by $\metricLagShift$ on average across held-out $D \geq 4$ configurations). We calibrate $\delta_{\mathrm{Adam}} = \deltaAdam$ ($\deltaAdamThree$ at three decimals) on the rarest-direction metric and report both metrics in out-of-sample tests.
\end{definition}

In the 1-D flip-orbit analysis of Adam (Appendix~\ref{app:companion}), the stall boundary $\lambda_{\mathrm{stall}} = 38\varepsilon_0/\alpha$ scales linearly with $\varepsilon_0$, explaining why $\delta_O$ absorbs the $O(1)$ prefactor without altering the $\log(1/\epsilon_O)$ slope. We now state the resulting leading-order wall prediction.

\begin{proposition}[Leading-order precision wall approximation]
\label{thm:wall}
Under the leading-order curvature scaling $\lambda_{\min}^{\mathrm{ng}} \propto V^{-(s+1)}D^{-s}$ of Proposition~\ref{thm:eigval}, an optimizer $O$ with working precision $\epsilon_O$ and detection lag $\delta_O$ transitions from recovery to stagnation at the critical exponent
\begin{equation}
\label{eq:sstar}
\sstar_O(V, D) = \frac{\log(1/\epsilon_O) - \log V}{\log(VD)} + \delta_O.
\end{equation}
For $s < \sstar_O(V,D)$, curvature in all non-gauge directions exceeds the precision floor and the optimizer descends; for $s > \sstar_O(V,D)$, curvature in the rarest direction falls below $\epsilon_O$ and recovery stalls in that direction.
\end{proposition}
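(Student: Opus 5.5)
The plan is to derive the boundary by setting the leading-order curvature equal to the working precision and solving for $s$, with $\delta_O$ absorbing all $O(1)$ prefactors. Proposition~\ref{thm:eigval} gives $\lambda_{\min}^{\mathrm{ng}}(s;V,D) = C\,V^{-(s+1)}D^{-s}$. Here $C$ collects the constants: the prefactor $4/(Vm)$ with $m = 1.5VD$, the normalisation $V/\sum_j j^{-s}$ in $\Sig$, and the tail-mass constant in $\nu_{\min}(N^\ast)$. Taking logarithms gives
\[
\log \lambda_{\min}^{\mathrm{ng}} = \log C - \log V - s\log(VD).
\]
This is strictly decreasing and affine in $s$ with slope $-\log(VD) < 0$ for $VD > 1$, so the threshold is unique.

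Solving $\lambda_{\min}^{\mathrm{ng}} = \epsilon_O$ gives
\[
s_{\mathrm{curv}} = \frac{\log(1/\epsilon_O) - \log V}{\log(VD)} + \frac{\log C}{\log(VD)}.
\]
Next I would use Definition~\ref{def:eps}. For Adam, gradient components along the rarest non-gauge eigendirection near $U^\ast$ scale as $\lambda_{\min}^{\mathrm{ng}}\,\|\delta U\|$. When this is below $\varepsilon_0$, the update is $\eta\hat m_t/(\sqrt{\hat v_t}+\varepsilon_0) \approx \eta\,\hat m_t/\varepsilon_0$. The effective step along that direction is then $O(\eta\lambda_{\min}^{\mathrm{ng}}/\varepsilon_0)$, which is swamped by the normalised noise from fast modes. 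Descent in that direction therefore effectively halts, while every larger-curvature direction, having $\sqrt{\hat v_t}\gtrsim\varepsilon_0$, still descends.

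Monotonicity in $s$ then gives the two-sided dichotomy. For $s$ below threshold, every non-gauge eigenvalue exceeds the floor. The rarest one is the minimum, and the gauge modes are excluded by Remark~\ref{rem:gauge}, since they do not affect $UU^\top$. For $s$ above threshold, the rarest direction sits below the floor and stalls.

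Finally, I would define $\delta_O$ to collect $\log C/\log(VD)$ together with the detection offset of Definition~\ref{def:lag}. That offset is the extra $s$ needed for the stalled residual, after the finite budget of $60$k steps, to push the chosen metric past $0.05$. By the linear scaling $\lambda_{\mathrm{stall}} \propto \varepsilon_0$ of the companion analysis, these contributions do not alter the coefficient of $\log(1/\epsilon_O)$. This yields~\eqref{eq:sstar}.

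The main obstacle is justifying that $\delta_O$ is approximately constant. The term $\log C/\log(VD)$ genuinely varies with $(V,D)$ and $s$: $\sum_j j^{-s}$ depends on $s$, and the finite-$D$ crossover of \S\ref{sec:hessian} changes the $D$-exponent. I would argue that over the range $D\ge 4$ this variation is small relative to the leading term, bounding it by the measured local $b_0$ deviations. I would state the result only as a leading-order approximation whose constancy of $\delta_O$ is validated empirically, not proved. A rigorous version would also need a stochastic model of Adam's second-moment noise, which I would defer to Appendix~\ref{app:companion}.
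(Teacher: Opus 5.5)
Your proposal is correct and takes essentially the same route as the paper. The paper's proof also requires the gradient along the weakest non-gauge direction, $|g_{\min}| \sim \lambda_{\min}^{\mathrm{ng}} R$ with $R = \|\delta U\| \sim 1$, to exceed $\epsilon_O$; it then solves $V^{-(s+1)}D^{-s} \gtrsim \epsilon_O$ in logarithms and absorbs the curvature prefactor and metric threshold into $\delta_O$. Your additions are sharpenings the paper leaves implicit rather than a different approach: the monotonicity argument, the Adam-specific mechanism, and the observation that $\log C/\log(VD)$ varies with $(V,D)$ and $s$, so that the constancy of $\delta_O$ is empirical rather than derived.
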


\begin{proof}
At distance $R = \|\delta U\| \sim 1$ from $U^\ast$, the gradient magnitude along the weakest non-gauge curvature direction is $|g_{\min}| \sim \lambda_{\min}^{\mathrm{ng}} R \propto V^{-(s+1)}D^{-s}$. Requiring $|g_{\min}| > \epsilon_O$ gives $V^{-(s+1)}D^{-s} \gtrsim \epsilon_O$, or $s\log(VD) < \log(1/\epsilon_O) - \log V$. Absorbing the $O(1)$ curvature prefactor and measurement threshold into $\delta_O$ yields~\eqref{eq:sstar}.
\end{proof}

Proposition~\ref{thm:wall} shows that $VD$ alone does not control the threshold: because $-\log V$ appears in the numerator of $[\log(1/\epsilon_O) - \log V]/\log(VD) + \delta_O$, increasing $V$ at fixed product $VD$ lowers $\sstar_O(V,D)$.

Next, we note how~\eqref{eq:sstar} compares two optimizers at the same $(V,D)$ when both operate within their causal precision regime.

\begin{corollary}[Cross-optimizer separation]
\label{cor:sep}
At fixed $(V,D)$ and matched detection lag ($\delta_{O_1} \approx \delta_{O_2}$), two division-normalised optimizers $O_1, O_2$ operating above their internal dynamical floors satisfy
\[
\sstar_{O_1}(V, D) - \sstar_{O_2}(V, D) \approx \frac{\log(1/\epsilon_{O_1}) - \log(1/\epsilon_{O_2})}{\log(VD)}.
\]
\end{corollary}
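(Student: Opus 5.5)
The plan is to obtain the corollary by subtracting two instances of Proposition~\ref{thm:wall} at the same $(V,D)$. I would write out \eqref{eq:sstar} for $O_1$ and for $O_2$:
\[
\sstar_{O_i}(V,D) = \frac{\log(1/\epsilon_{O_i}) - \log V}{\log(VD)} + \delta_{O_i}, \qquad i = 1,2 .
\]
The denominator $\log(VD)$ is the same in both. It comes from the curvature scaling $\lambda_{\min}^{\mathrm{ng}} \propto V^{-(s+1)}D^{-s}$ of Proposition~\ref{thm:eigval}, which depends only on the problem and not on the optimizer. The $-\log V$ term in the numerator is also shared.

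Taking the difference, the $-\log V/\log(VD)$ terms cancel exactly. What remains is
\[
\frac{\log(1/\epsilon_{O_1}) - \log(1/\epsilon_{O_2})}{\log(VD)} + (\delta_{O_1} - \delta_{O_2}).
\]
Under the matched-lag hypothesis, the last bracket is $o(1)$ relative to the leading term, or at least small enough to neglect. That gives the stated $\approx$.

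Two caveats need to be made explicit.
\begin{enumerate}
\item Proposition~\ref{thm:wall} applies to $O_i$ only if $\epsilon_{O_i}$ is a genuine precision floor in the sense of Definition~\ref{def:eps}. The hypothesis ``operating above their internal dynamical floors'' supplies exactly this: it rules out the saturation regime discussed after Definition~\ref{def:eps} and in Appendix~\ref{app:companion}, where the effective floor is set by dynamics rather than by $\epsilon_O$.
\item The $O(1)$ curvature prefactor absorbed into $\delta_O$ in the proof of Proposition~\ref{thm:wall} must be common to both optimizers. It is, because it is a property of $\E[H_s\mid U^\ast]$ and not of the optimizer. Any optimizer-specific prefactor, such as the $38/\alpha$ in $\lambda_{\mathrm{stall}}$, is optimizer-dependent and is assumed equal under matched lag.
\end{enumerate}

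The main obstacle is justifying the matched-lag assumption rather than the algebra. I would argue as in Definition~\ref{def:lag}: for division-normalised optimizers the lag arises from the metric threshold and from a stall boundary that is linear in $\epsilon_O$. A multiplicative constant $c_O$ in $\lambda_{\mathrm{stall}} = c_O\epsilon_O$ therefore shifts $\sstar$ by $\log c_O/\log(VD)$. The corollary is exact up to $(\log c_{O_1}-\log c_{O_2})/\log(VD)$, which is small when the two optimizers share Adam-like normalisation.
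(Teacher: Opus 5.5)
Your proof is correct and follows the paper's own argument. The paper gives no separate proof because the corollary is just the difference of two copies of \eqref{eq:sstar} at the same $(V,D)$: the shared $-\log V/\log(VD)$ term cancels, the lags cancel under $\delta_{O_1}\approx\delta_{O_2}$, and the ``above the dynamical floor'' hypothesis ensures the nominal $\epsilon_O$ is the operative floor (for Adam at $\eta=10^{-2}$, $\epsilon_O \gtrsim 10^{-6}$).

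One small slip is in your side remark. A prefactor $c_O$ in $\lambda_{\mathrm{stall}} = c_O\epsilon_O$ shifts $\sstar$ by $-\log c_O/\log(VD)$, since a larger $c_O$ makes the optimizer stall earlier. This sign does not affect your residual bound $(\log c_{O_1}-\log c_{O_2})/\log(VD)$ in magnitude.
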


Corollary~\ref{cor:sep} predicts, for instance, a difference of $\ln(10)/\ln(256) \approx 0.42$ between $\epsilon = 10^{-9}$ and $\epsilon = 10^{-8}$ at $(32,8)$, provided both solvers remain in the regime where their nominal $\epsilon_O$ sets the floor (for Adam at $\eta = 10^{-2}$, this requires $\epsilon_O \gtrsim 10^{-6}$).

Finally, varying $D$ at fixed $V$ eliminates $\delta_O$ entirely, yielding a parameter-free differential test.

\begin{remark}[Parameter-free $D$-dependence test]
\label{rem:ddep}
In the difference $\sstar(16, 4) - \sstar(16, 8)$, the additive constant $\delta_O$ cancels:
\[
\sstar(16, 4) - \sstar(16, 8) = \bigl(\log(1/\epsilon_O) - \log 16\bigr)\Bigl(\frac{1}{\log 64} - \frac{1}{\log 128}\Bigr) \approx \dtestB
\]
for Adam at $\epsilon_O = 10^{-8}$. A $V$-only law $\sstar \propto 1/\log V$ predicts zero difference, whereas the empirical rarest-direction onset difference is $\dtestMeas$ ($\onsetSixteenFourThree - \onsetSixteenEightThree$ before two-decimal rounding, error $\dtestErrB$).
\end{remark}

\section{Empirical Calibration and Validation}
\label{sec:validation}

\subsection{In-sample calibration on four $D \geq 4$ configurations}

We first sweep $s \in \{0, 0.5, 1, \ldots, 8\}$ on four configurations $(V,D) \in \{(8,4), (16,8), (32,8), (16,4)\}$ with $m = 1.5VD$ and $10$ seeds per cell, running Adam ($\varepsilon_0 = 10^{-8}$) for $60\text{k}$ steps. Fitting the single offset $\delta_{\mathrm{Adam}}$ by least squares to the rarest-direction onsets on these four rows gives $\delta_{\mathrm{Adam}} = \deltaAdam$ ($\deltaAdamThree$).

\begin{table}[t]
\centering
\caption{In-sample calibration of the precision wall for Adam ($\varepsilon_0 = 10^{-8}$, $60\text{k}$ steps, $10$ seeds per configuration, $s$-grid spacing $0.5$). Onsets measure where the rarest-direction relative error crosses $0.05$. Predictions use $\sstar = [\log(1/\varepsilon_0) - \log V]/\log(VD) + \delta_{\mathrm{Adam}}$ with $\delta_{\mathrm{Adam}} = \deltaAdam$ fitted to these four rows (in-sample RMS $\rmsCalibB$, three residual degrees of freedom).}
\label{tab:sstar}
\begin{tabular}{rr|r|r|r|c}
\toprule
$V$ & $D$ & $m = 1.5VD$ & predicted $\sstar$ & empirical onset (rarest) & $|\Delta|$ \\
\midrule
8 & 4 & \mEightFour & \predEightFour & \onsetEightFour & \absdEightFour \\
16 & 8 & \mSixteenEight & \predSixteenEight & \onsetSixteenEight & \absdSixteenEight \\
32 & 8 & \mThirtyTwoEight & \predThirtyTwoEight & \onsetThirtyTwoEight & \absdThirtyTwoEight \\
16 & 4 & \mSixteenFour & \predSixteenFour & \onsetSixteenFour & \absdSixteenFour \\
\bottomrule
\end{tabular}
\end{table}

Table~\ref{tab:sstar} shows that the one-parameter fit matches all four calibration configurations within $|\Delta| \leq \maxAbsDelta$ (RMS $\rmsCalibB$). Because $\delta_{\mathrm{Adam}}$ is fitted on these four rows, Table~\ref{tab:sstar} is a three-degree-of-freedom calibration rather than an out-of-sample validation; the differential test $\sstar(16,4) - \sstar(16,8) = \dtestMeas$ vs.\ predicted $\dtestB$ in Remark~\ref{rem:ddep} represents one of those three residual degrees of freedom. Note also that directly inverting the empirical finite-$m$ spectral regression onto the wall gives a slightly larger in-sample RMS ($\invRms$, $D$-test $\invDtest$ with error $\invDtestErr$) than the leading-order integer form~\eqref{eq:sstar}; the integer form's low residual on Table~\ref{tab:sstar} is an empirical property of the onsets rather than a theorem-level consequence of the finite-$D$ spectrum.

\subsection{Held-out validation, dual-metric reporting, and fixed-$VD$ pairs}

To test~\eqref{eq:sstar} out of sample, we freeze $\delta_{\mathrm{Adam}} = \deltaAdam$ and evaluate three held-out configurations $(32,4)$, $(64,4)$, and $(16,2)$ ($5$ seeds per cell, $s$-grid spacing $0.5$, $60\text{k}$ steps), together with three fixed-$VD$ pairs that test whether $\sstar$ depends on $V$ and $D$ separately at constant $VD$. Because Table~\ref{tab:sstar} calibrated $\delta_{\mathrm{Adam}}$ on the rarest-direction metric while the held-out protocol originally scored aggregate $\relDM$, Table~\ref{tab:heldout} reports \emph{both} metrics side by side from the same run trajectories.

\begin{table}[t]
\centering\small
\caption{Out-of-sample validation on held-out configurations and fixed-$VD$ pairs ($\delta_{\mathrm{Adam}}$ frozen at $\deltaAdam$, $5$ seeds, $60\text{k}$ steps). Both the aggregate metric ($\relDM > 0.05$) and the rarest-direction metric ($|\Delta M_{VV}|/|M^\ast_{VV}| > 0.05$, matching Table~\ref{tab:sstar}'s calibration definition) are reported from the same trajectories.}
\label{tab:heldout}
\begin{tabular}{rr|r|r|rr|rr}
\toprule
$V$ & $D$ & $VD$ & predicted $\sstar$ & onset ($\relDM$) & $\Delta$ ($\relDM$) & onset (rarest) & $\Delta$ (rarest) \\
\midrule
32 & 4 & 128 & \hoPredThirtyTwoFour & \hoOnsetThirtyTwoFour & $\hoDeltaThirtyTwoFour$ & 3.34 & $-0.228$ \\
64 & 4 & 256 & \hoPredSixtyFourFour & \hoOnsetSixtyFourFour & $\hoDeltaSixtyFourFour$ & 2.98 & $-0.07$ \\
16 & 2 & 32  & \hoPredSixteenTwo    & \hoOnsetSixteenTwo    & $\hoDeltaSixteenTwo$    & 5.71 & $+0.711$ \\
\midrule
\multicolumn{8}{l}{\emph{Fixed-$VD$ pairs} ($\delta_O$ cancels; a product-only law predicts $0$ for every pair):} \\
\multicolumn{2}{l}{pair} & $VD$ & pred.\ diff. & \multicolumn{2}{r|}{$\relDM$ onsets \& diff.} & \multicolumn{2}{r}{rarest diff.} \\
\multicolumn{2}{l}{$(32,4)$ vs.\ $(16,8)$} & 128 & $\pairPredI$ & \pairOnsetAI\ vs.\ \pairOnsetBI   & $\pairMeasI$   & \multicolumn{2}{r}{$-0.374$} \\
\multicolumn{2}{l}{$(64,4)$ vs.\ $(32,8)$} & 256 & $\pairPredII$  & \pairOnsetAII\ vs.\ \pairOnsetBII & $\pairMeasII$  & \multicolumn{2}{r}{$-0.198$} \\
\multicolumn{2}{l}{$(16,2)$ vs.\ $(8,4)$}  & 32  & $\pairPredIII$ & \pairOnsetAIII\ vs.\ \pairOnsetBIII & $\pairMeasIII$ & \multicolumn{2}{r}{$+0.50$} \\
\bottomrule
\end{tabular}
\end{table}

Three findings emerge from Table~\ref{tab:heldout}:
\begin{enumerate}
    \item \emph{Held-out $D \geq 4$ accuracy under both metrics.} On $(32,4)$ and $(64,4)$, the frozen prediction matches the aggregate onset within $|\Delta| \leq \hoMaxAbsDge$ and the like-for-like rarest-direction onset within $|\Delta| \leq 0.228$ ($-0.228$ at $(32,4)$, where the 5-seed leave-one-out range is $3.25\text{--}3.62$, and $-0.07$ at $(64,4)$).
    \item \emph{Fixed-$VD$ pairs confirm the $-\log V$ shift for $D \geq 4$.} Under a $(VD)^{-(s+1)}$ law, configurations sharing $VD$ must have identical walls ($\Delta\sstar = 0$). Proposition~\ref{thm:wall} predicts a negative shift $-\log 2/\log(VD)$ when doubling $V$ at fixed $VD$. Both $D \geq 4$ pairs---$(32,4)$ vs.\ $(16,8)$ at $VD=128$ and $(64,4)$ vs.\ $(32,8)$ at $VD=256$---are negative on both metrics ($\pairMeasI$ and $\pairMeasII$ on $\relDM$; $-0.374$ and $-0.198$ on rarest-direction error), matching the predicted sign and scale ($\pairPredI$ and $\pairPredII$).
    \item \emph{Breakdown at $D = 2$.} The rank-$2$ configuration $(16,2)$ fails the prediction on both metrics ($\Delta = \hoDeltaSixteenTwo$ on $\relDM$, $+0.711$ on rarest) and reverses the fixed-$VD$ pair sign ($\pairMeasIII$ against $\pairPredIII$). As shown in \S\ref{sec:hessian}, $D = 2$ sits at the bottom of the spectral crossover where a two-column random projection $N^\ast$ resists near-singularity, marking $D \geq 4$ as the boundary of the leading-order approximation.
\end{enumerate}

\subsection{Multi-octave $(V,D)$ grid validation ($V \in \{64,\ldots,512\} \times D \in \{4,\ldots,32\}$)}
\label{sec:grid}

To test~\eqref{eq:sstar} across three octaves of rank $D$ and higher ambient dimension $V$ without refitting any parameter, we evaluated a pre-registered \gridNScored-cell Adam grid over $V \in \{64, 128, 256, 512\} \times D \in \{4, 8, 16, 32\}$ with $\delta_{\mathrm{Adam}} = \deltaAdamThree$ frozen from Table~\ref{tab:sstar} (no grid cell overlaps the calibration set). Each cell is run across $5$ seeds for $60\text{k}$ steps in float64 arithmetic ($\varepsilon_0 = 10^{-8}$, $\eta = 10^{-2}$, $m = 1.5VD$) on an eight-point $s$-grid spaced by $0.25$ around the nearest quarter-integer $c$ to the predicted $\sstar(V,D)$ ($s \in \{c - 0.75, \ldots, c + 1.00\}$), scored by the $60\text{k}$-step aggregate $\relDM > 0.05$ onset. In $15$ of the \gridNScored\ cells, the symmetric Gaussian design tensor $G_{\mathrm{sym}}$ is stored in float64; at $(512,32)$, the instance is defined by $G_{\mathrm{sym}}$ rounded to float32 and widened in-kernel to float64 arithmetic after passing a pre-registered bit-identity conversion check (giving $(512,32)$ onset $\gridFthirtyTwoOnset$, or $\gridFthirtyTwoAbsDelta$ below the predicted $\gridFthirtyTwoLawB$; Appendix~\ref{app:ledger}).

\begin{table}[t]
\centering\small
\caption{Out-of-sample onset residuals $\Delta = \sstar_{\mathrm{meas}}(\relDM) - \sstar_{\mathrm{pred}}$ across the pre-registered \gridNScored-cell Adam grid ($5$ seeds, $60\text{k}$ steps, $\varepsilon_0 = 10^{-8}$, $\delta_{\mathrm{Adam}} = \deltaAdamThree$ frozen from Table~\ref{tab:sstar}). The rightmost column reports the row Spearman rank correlation $\rho(\Delta, D)$. Cell $(512,32)^\dagger$ uses a float32-stored design instance with float64 arithmetic.}
\label{tab:grid}
\begin{tabular}{r|rrrr|r}
\toprule
$V \setminus D$ & $D = 4$ & $D = 8$ & $D = 16$ & $D = 32$ & $\rho(\Delta, D)$ \\
\midrule
$V = 64$  & $\gridDeltaSixtyFourFour$   & $\gridDeltaSixtyFourEight$   & $\gridDeltaSixtyFourSixteen$   & $\gridDeltaSixtyFourThirtyTwo$   & $\gridRfiveRhoSixtyFour$ \\
$V = 128$ & $\gridDeltaOneTwoEightFour$ & $\gridDeltaOneTwoEightEight$ & $\gridDeltaOneTwoEightSixteen$ & $\gridDeltaOneTwoEightThirtyTwo$ & $\gridRfiveRhoOneTwoEight$ \\
$V = 256$ & $\gridDeltaTwoFiveSixFour$  & $\gridDeltaTwoFiveSixEight$  & $\gridDeltaTwoFiveSixSixteen$  & $\gridDeltaTwoFiveSixThirtyTwo$  & $\gridRfiveRhoTwoFiveSix$ \\
$V = 512$ & $\gridDeltaFiveOneTwoFour$  & $\gridDeltaFiveOneTwoEight$  & $\gridDeltaFiveOneTwoSixteen$  & $\gridDeltaFiveOneTwoThirtyTwo^\dagger$ & $\gridRfiveRhoFiveOneTwo$ \\
\bottomrule
\end{tabular}
\end{table}

Table~\ref{tab:grid} reports all \gridNScored\ cell residuals. The measured onset lies within the pre-registered $\pm 0.45$ band of~\eqref{eq:sstar} in \gridRonePass\ of \gridNScored\ cells (meeting the pre-registered ``SUPPORTED'' criterion), with \gridRoneTight\ of \gridNScored\ cells within $\pm 0.15$ and largest residual $|\Delta| = \gridRoneMaxAbs$ at $(128,4)$---comparable to the $\gridRoundingSdRel$ onset standard deviation induced by a one-ULP perturbation of the initial point at $(32,4)$ ($\gridRoundingSdRare$ on the rarest-direction metric). Because $\delta_{\mathrm{Adam}}$ cancels in pairwise differences, doubling $D$ at fixed $V$ and doubling $V$ at fixed $D$ test the leading-order scaling with zero free parameters: the measured onset shift matches the predicted shift within the $0.35$ band in \gridRtwoPass\ of \gridRtwoN\ $D \to 2D$ pairs (largest deviation $\gridRtwoMaxDev$, at $(512,16) \to (512,32)$; 11 of \gridRtwoN\ on the rarest-direction metric) and in \gridRthreePass\ of \gridRthreeN\ $V \to 2V$ pairs (largest deviation $\gridRthreeMaxDev$, at $(64,4) \to (128,4)$).

Because the finite-$D$ Hessian spectrum scales near $D^{-(s+1/2)}$ rather than $D^{-s}$ (\S\ref{sec:hessian}), the wall at larger $D$ is expected to sit below~\eqref{eq:sstar} by roughly $\frac{1}{2}\log D / \log(VD)$, predicting $\rho(\Delta, D) < 0$ within each fixed-$V$ row. On the pre-registered aggregate $\relDM$ metric, $\Delta$ decreases with $D$ in \gridRfiveNegRows\ of \gridRfiveTotRows\ rows ($V = 64$: $\gridRfiveRhoSixtyFour$, $V = 128$: $\gridRfiveRhoOneTwoEight$, $V = 256$: $\gridRfiveRhoTwoFiveSix$, $V = 512$: $\gridRfiveRhoFiveOneTwo$, meeting the pre-registered $\geq 3/4$ ``HOLDS'' criterion; at the $20\text{k}$-step checkpoint it also holds in 3 of 4 rows, with the exception shifting from $V = 128$ to $V = 64$). On the rarest-direction metric, however, this row-wise trend \emph{does not hold}: $\rho < 0$ in only 2 of 4 rows ($V = 64$: $-0.8$, $V = 128$: $+0.8$, $V = 256$: $+0.2$, $V = 512$: $-0.4$). As an exploratory observation, all \gridNScored\ residuals in Table~\ref{tab:grid} are negative (mean $\gridMeanDelta$, range $[\gridDeltaOneTwoEightFour, \gridDeltaSixtyFourFour]$), which is consistent in direction with the $D^{-(s+1/2)}$ spectral shift of \S\ref{sec:hessian}.

\subsection{Arithmetic precision vs.\ denominator floor, and the $\varepsilon_0$ saturation knee}

\textbf{Floating-point format does not replace $\varepsilon_0$ via $\max(\varepsilon_0, \epsilon_{\mathrm{dtype}})$.}
Running Adam at $\varepsilon_0 = 10^{-8}$ across fp16, fp32, and fp64 ($5$ seeds, integer $s$-grid) with $\delta_{\mathrm{Adam}} = \deltaAdam$ frozen shows that fp32 ($\epsilon_{\mathrm{mach}} = 1.19\times 10^{-7} > \varepsilon_0$) gives the same wall as fp64: $5.28$ vs.\ $5.27$ at $(8,4)$ and $3.74$ vs.\ $3.67$ at $(16,8)$ (implied precision floors $7.6\times 10^{-9}\text{--}1.17\times 10^{-8}$). Per-step rounding noise averages out in the second-moment accumulator $v_t$, so $\max(\varepsilon_0, \epsilon_{\mathrm{dtype}})$ is refuted. Only fp16 ($\epsilon_{\mathrm{mach}} = 9.8\times 10^{-4}$) shifts the wall lower (implied floor $1.9\times 10^{-5}$).

\textbf{Six-decade $\varepsilon_0$ ladder and saturation below $10^{-8}$.}
Sweeping $\varepsilon_0 \in \{10^{-2},\allowbreak 10^{-4},\allowbreak 10^{-6},\allowbreak 10^{-8},\allowbreak 10^{-10},\allowbreak 10^{-12}\}$ at $(8,4)$ and $(16,8)$ ($3$ seeds, integer $s$-grid, aggregate $\relDM$) tests where $\varepsilon_0$ actively controls the wall. Over the four decades $\varepsilon_0 = 10^{-4} \to 10^{-8}$ at $(8,4)$, the onset moves from $3.53$ to $5.49$, a shift of $\epsShiftMeasEight$ against the parameter-free prediction $\Delta\sstar = \ln(10^4)/\ln(32) \approx \epsShiftPredEight$. Breaking the response down by two-decade steps (as a fraction of the predicted $2\ln 10/\log(VD)$ per step) reveals a sharp saturation knee:
\begin{center}\small
\begin{tabular}{lrrrrr}
\toprule
$\varepsilon_0$ step & $10^{-2}\!\to\!10^{-4}$ & $10^{-4}\!\to\!10^{-6}$ & $10^{-6}\!\to\!10^{-8}$ & $10^{-8}\!\to\!10^{-10}$ & $10^{-10}\!\to\!10^{-12}$ \\
\midrule
$(8,4)$ & $117\%$ & $93\%$ & $54\%$ & $6\%$ & $-1\%$ \\
$(16,8)$ & $170\%$ & $107\%$ & $35\%$ & $8\%$ & $0\%$ \\
\bottomrule
\end{tabular}
\end{center}
The $\varepsilon_0$ lever tracks the predicted slope above $10^{-6}$, retains roughly half its slope across $10^{-6} \to 10^{-8}$, and is flat below $10^{-8}$: four additional decades down to $10^{-12}$ move the onset by only $+0.07$ at $(8,4)$ (saturating near $5.56$) and $+0.08$ at $(16,8)$ (saturating near $3.58$). As derived in Appendix~\ref{app:companion} and our companion paper~\citep{companion2026floor}, fast modes oscillating at the adaptive edge of stability leak gradient into rare modes and create a self-generated denominator floor $\epsilon_{\mathrm{dyn}} \approx 4.8\times 10^{-9}$ at learning rate $\eta = 10^{-2}$. Thus $\varepsilon_0 = 10^{-8}$ sits right at the knee where the external hyperparameter floor hands control over to Adam's internal dynamical floor.

\subsection{Residual anisotropy and pre-wall slowdown}

\begin{figure}[t]
\centering
\includegraphics[width=0.82\linewidth]{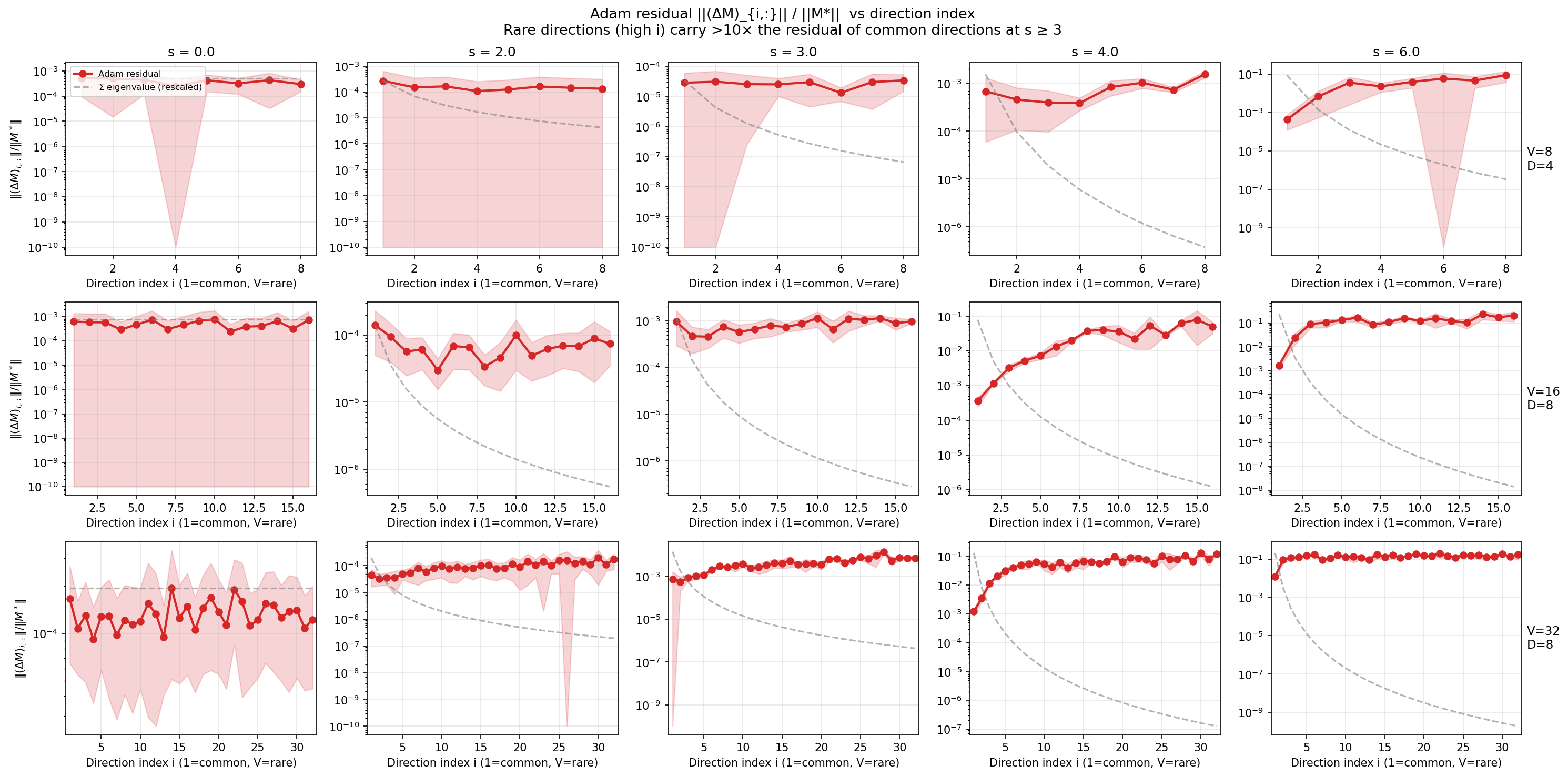}
\caption{Per-direction relative error $\|(\Delta M)_{i,:}\| / \|M^\ast\|$ across spectral index $i$ for Adam at $\varepsilon_0 = 10^{-8}$. At $s = 0$ errors are uniform across coordinates; for $s \geq 3$, error concentrates in the rarest directions (high $i$) by up to $25\times$ over common directions.}
\label{fig:residual}
\end{figure}

\begin{figure}[t]
\centering
\includegraphics[width=0.68\linewidth]{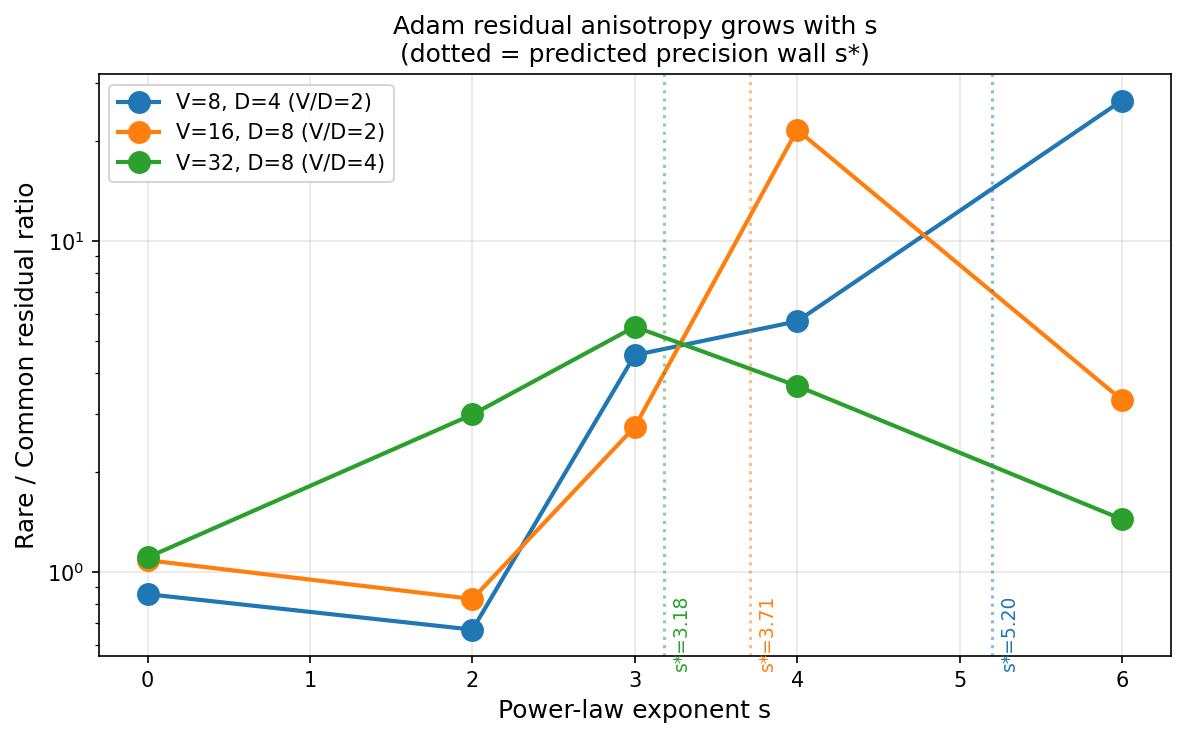}
\caption{Ratio of rare-direction to common-direction residual error vs.\ exponent $s$. Dotted vertical lines mark the predicted precision wall $\sstar(V,D)$ from~\eqref{eq:sstar} at $\varepsilon_0 = 10^{-8}$; above $\sstar$, the ratio turns over as common directions also begin to degrade.}
\label{fig:slowdown}
\end{figure}

Decomposing the recovery error $\Delta M = U_{\mathrm{adam}} U_{\mathrm{adam}}^\top - M^\ast$ in the $\Sig$ eigenbasis confirms the directional mechanism of Proposition~\ref{thm:wall}. Figure~\ref{fig:residual} shows that while isotropic sensing ($s=0$) yields uniform residuals across all $V$ coordinates, power-law sensing ($s \geq 3$) concentrates error in the high-index (rarest) directions by up to $25\times$. Figure~\ref{fig:slowdown} shows that this rare-to-common anisotropy ratio peaks near the predicted threshold $\sstar(V,D)$ before turning over once bulk directions also fail at high $s$. Below $\sstar(V,D)$, the step count required to reach $\relDM < 0.005$ grows with the condition number $\kappa(H_s) = \lambda_{\max}/\lambda_{\min} \propto V^{s+1}D^{s}$.

\section{Optimizer-Dependent Walls}
\label{sec:optwalls}

To test whether~\eqref{eq:sstar} specifically characterizes division-normalised methods, we compare eight optimizers at $(V,D) = (16,4)$ ($m = \mSixteenFour$) across $s \in \{0, 2, 3, 3.5, 4, 4.5, 5, 6\}$ with $20$ seeds per condition ($1280$ runs total, learning rates tuned at $s=0$, scored on aggregate $\relDM > 0.05$ at $60\text{k}$ steps).

\begin{figure}[t]
\centering
\includegraphics[width=0.92\linewidth]{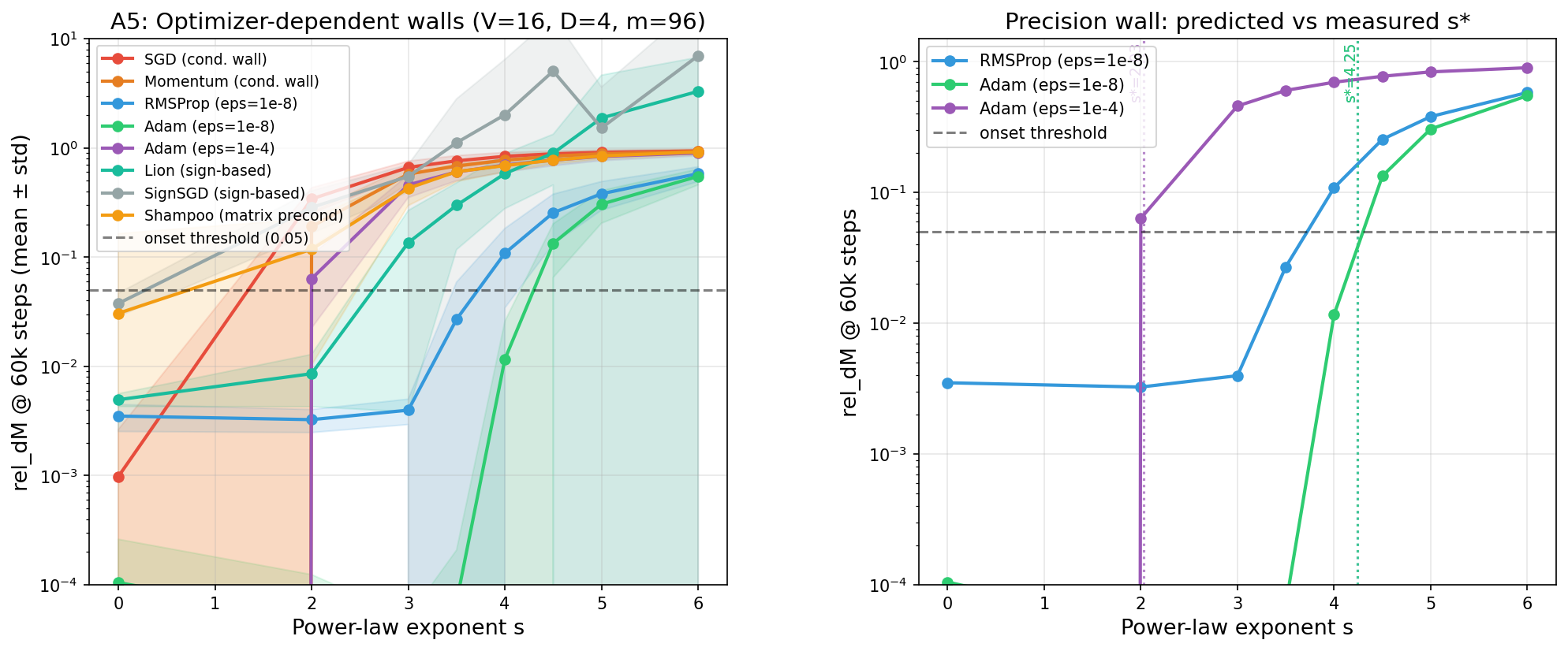}
\caption{Recovery error $\relDM$ at $60\text{k}$ steps across spectral exponent $s$ for eight optimizers at $(V,D)=(16,4)$. \emph{Left:} Three distinct failure regimes appear: sign-quantisation stall (SignSGD, Lion), conditioning stall (SGD, Momentum), and denominator-precision stall (Adam, RMSProp). \emph{Right:} Division-normalised optimizers with vertical dotted lines marking the predicted walls $\epsPredFour$ ($\varepsilon_0 = 10^{-4}$) and $\epsPredEight$ ($\varepsilon_0 = 10^{-8}$).}
\label{fig:a5}
\end{figure}

\subsection{Three distinct optimizer failure mechanisms}

Figure~\ref{fig:a5} separates first-order methods into three families:
\begin{enumerate}
    \item \textbf{Conditioning wall (SGD, Momentum).} Optimizers without per-coordinate division normalisation have no denominator floor $\epsilon_O$. Their convergence rate is limited directly by the Hessian condition number $\kappa(H_s) = \lambda_{\max}/\lambda_{\min} \propto V^{s+1}D^{s}$. Under a $60\text{k}$-step budget, SGD stalls at $\sstar = \optOnsetSgd$ and Momentum at $\sstar = \optOnsetMomentum$.
    \item \textbf{Denominator-precision wall (Adam, RMSProp, Adam with $\varepsilon_0 = 10^{-4}$).} Diagonal adaptive methods normalise updates by $\sqrt{v_t} + \varepsilon_0$, cancelling spectral conditioning until curvature drops below $\epsilon_O$, where they hit the precision wall $[\log(1/\epsilon_O) - \log V]/\log(VD) + \delta_O$.
    \item \textbf{Sign-quantisation wall (SignSGD, Lion~\citep{chen2023lion}).} Sign-based updates replace gradient magnitudes with $\mathrm{sign}(g)$, stalling early under anisotropic coupling ($\sstar = \optOnsetSignsgd$ for SignSGD; $\sstar = \optOnsetLion$ for Lion with momentum blending).
\end{enumerate}

\subsection{Clean $\varepsilon_0$ intervention at $(16,4)$ and the Shampoo counterexample}

Within the 20-seed optimizer sweep at $(16,4)$, holding the code path, tuned learning rate, and aggregate $\relDM$ metric fixed while varying only Adam's $\varepsilon_0$ between $10^{-8}$ and $10^{-4}$ provides a direct test of the $\log(1/\epsilon_O)/\log(VD)$ slope:
\begin{itemize}
    \item \textbf{Adam ($\varepsilon_0 = 10^{-8}$):} predicted $\sstar = \epsPredEight$, measured $\sstar = \epsOnsetEight$ (residual $\epsDeltaEight$, reflecting the coarser $s$-grid and per-optimizer tuned rate of this sweep compared to Table~\ref{tab:sstar}).
    \item \textbf{Adam ($\varepsilon_0 = 10^{-4}$):} predicted $\sstar = \epsPredFour$, measured $\sstar = \epsOnsetFour$ (residual $\epsDeltaFour$).
\end{itemize}
In the difference between the two arms, both $\delta_O$ and the protocol offset cancel: the measured shift is $\Delta\sstar_{\mathrm{measured}} = \epsOnsetEight - \epsOnsetFour = \epsShiftMeas$ against the parameter-free prediction $\Delta\sstar_{\mathrm{predicted}} = \ln(10^4)/\ln(64) = \epsShiftPred$ (ratio $\epsShiftRatio$).

\textbf{Why matrix preconditioning can worsen the wall: Shampoo.}
One might expect full-matrix block preconditioning via Shampoo~\citep{gupta2018shampoo} ($\epsilon = 10^{-8}$) to outperform diagonal Adam under heavy spectral skew. Instead, Shampoo fails much earlier, at $\sstar = \optOnsetShampoo$. Because Shampoo accumulates $P \mathbin{+}= g g^\top / V$ and applies $P^{-1/4}$, rare coordinates with vanishing gradient have $P_{ii} \approx \epsilon$, so the inverse fourth root multiplies rare-direction noise by $\epsilon^{-1/4} = 10^2$ instead of suppressing it.

\section{Extrapolation to Large-Vocabulary Heads and Discussion}
\label{sec:desert}
\label{sec:discussion}

\subsection{Order-of-magnitude extrapolation to large vocabulary size}

Because $\log V$ appears with a negative sign in the numerator of~\eqref{eq:sstar} as well as in the denominator $\log(VD)$, the precision wall drops steadily as $V$ grows. Consider the dimensions of a standard large-vocabulary output head, $(V, D) = (32000, 1024)$, at $\epsilon_O = 10^{-8}$:
\begin{equation}
\label{eq:lmhead}
\begin{split}
\sstar_{\mathrm{Adam}}
  &= \frac{\log(10^8) - \log(V)}{\log(V D)} + \deltaAdamThree \\
  &= \frac{18.421 - 10.373}{17.305} + \deltaAdamThree
   = 0.465 + \deltaAdamThree \approx \lmStar
\end{split}
\end{equation}
(where at one decimal place the numerator and denominator display as $\lmBase$ and $\lmLogVD$). At this scale, the calibrated lag $\delta_{\mathrm{Adam}} = \deltaAdam$ accounts for $\sim \lmShare\%$ of $\sstar$, so~\eqref{eq:lmhead} is an order-of-magnitude extrapolation across three decades in $V$ rather than a precise point forecast. Qualitatively, it indicates that at large $V$, $\sstar$ drops to $O(1)$---meaning that even Zipfian power-law spectra with $s \approx 1$ approach the regime where rare-direction curvature contends with the optimizer's denominator floor.

\subsection{Relation to prior work}

\textbf{Role of Adam's $\varepsilon_0$ and adaptive conditioning.}
\citet{everett2024adam} demonstrated empirically that fixed $\varepsilon_0$ causes gradient underflow across model scales and introduced Adam-atan2 to remove the hyperparameter. Our results supply an analytical and experimental phase boundary for this effect in a setting with a single tunable spectral exponent $s$: diagonal preconditioning neutralises spectral skew until $\lambda_{\min}^{\mathrm{ng}}$ reaches $\epsilon_O$. Similarly, \citet{kunstner2024zipf,kunstner2025adam} showed that Adam and sign descent dramatically outperform gradient descent under Zipfian class imbalance; Proposition~\ref{thm:wall} and Figure~\ref{fig:a5} identify the upper boundary $s = \sstar_O(V,D)$ where Adam's diagonal preconditioning advantage terminates.

\textbf{Power-law phase retrieval and random matrix landscapes.}
\citet{braun2025powerlaw} studied the convergence rate of rank-1 phase retrieval under power-law spectra, characterizing a slow spectral-tail learning phase. Here we study the rank-$D$ Burer--Monteiro setting and focus on the precision-limited termination threshold rather than asymptotic rates. Unlike spherical spin-glass complexity analyses via the Kac--Rice formula~\citep{auffinger2013random,auffinger2013complexity}, which require isotropic invariance to count critical points, the stagnation documented here occurs in a trap-free landscape and is governed by the local Hessian spectrum at $U^\ast$ relative to $\epsilon_O$.

\subsection{Limitations and practical implications}

\begin{enumerate}
    \item \textbf{Calibration scope, grid resolution, and metric sensitivity of the $D$-trend.} Although out-of-sample validation spans $V \in \{8, \ldots, 512\}$ and three octaves of rank $D \in \{4, 8, 16, 32\}$ (\S\ref{sec:grid}), five concrete limitations remain: (i) the detection lag $\delta_{\mathrm{Adam}} = \deltaAdam$ is calibrated solely on the four $V \leq 32, D \in \{4,8\}$ configurations of Table~\ref{tab:sstar}; (ii) the \gridNScored-cell multi-octave grid tests only Adam at fixed $\varepsilon_0 = 10^{-8}$ and learning rate $\eta = 10^{-2}$; (iii) the $s$-grid spacing on the multi-octave grid is $0.25$ (with a $\sim \gridRoundingSdRel\text{--}\gridRoundingSdRare$ rounding-level onset spread); (iv) the downward drift of $\Delta$ with $D$ holds in \gridRfiveNegRows\ of \gridRfiveTotRows\ rows on aggregate $\relDM$ but fails (2 of 4 rows) on the rarest-direction metric; and (v) empirical optimization runs extend only to $V = 512$, leaving the $(32000, 1024)$ calculation in~\eqref{eq:lmhead} as an untested analytical extrapolation.
    \item \textbf{Failure at rank $D = 2$.} All positive validation results require $D \geq 4$. At $D = 2$, $(16,2)$ walls much later than predicted ($\Delta = \hoDeltaSixteenTwo$) and reverses the fixed-$VD$ pair ordering, consistent with a two-column compression $N^\ast$ remaining better conditioned than the large-$D$ tail-mass law predicts.
    \item \textbf{Saturation of the $\varepsilon_0$ lever below $10^{-8}$.} Because Adam generates an internal dynamical floor $\epsilon_{\mathrm{dyn}} \approx 4.8\times 10^{-9}$ at $\eta = 10^{-2}$ (Appendix~\ref{app:companion}; \citealp{companion2026floor}), practitioners cannot push the precision wall to larger $s$ simply by setting $\varepsilon_0 = 10^{-12}$. Overcoming the wall above $\sstar$ requires reducing the learning-rate-induced chatter floor, improving measurement conditioning, or switching to second-order/SDP solvers within their identifiability limits.
\end{enumerate}

\section{Conclusion}
\label{sec:conclusion}

Under power-law measurement spectra, nonconvex Burer--Monteiro matrix sensing stalls not because spurious local minima emerge, but because curvature along the rarest measurement directions drops below the optimizer's effective denominator floor. Balancing the leading-order non-gauge Hessian scaling $V^{-(s+1)}D^{-s}$ against working precision $\epsilon_O$ yields the closed-form approximation $\sstar_O(V, D) = [\log(1/\epsilon_O) - \log V]/\log(VD) + \delta_O$. Calibrated with a single lag constant on four $D \geq 4$ configurations ($|\Delta| \leq \maxAbsDelta$), this boundary predicts held-out configurations across $V \in \{32, \ldots, 512\} \times D \in \{4, \ldots, 32\}$ (including all \gridRonePass\ of \gridNScored\ cells of the multi-octave Adam grid within $0.45$), captures the parameter-free doubling shifts across fixed-$VD$, fixed-$V$, and fixed-$D$ pairs, and separates denominator-precision walls from conditioning and sign-quantisation walls across eight optimizers.

\newpage
\appendix

\section{Companion Derivation: Dynamical Stall Floor at Vanishing Curvature}
\label{app:companion}

To make the $\varepsilon_0$-saturation discussion of \S\ref{sec:validation} self-contained without requiring the reader to consult~\citet{companion2026floor}, we summarize here the one-dimensional period-2 orbit derivation and the cross-mode chatter mechanism that sets Adam's denominator floor below $\varepsilon_0 = 10^{-8}$.

\textbf{Period-2 flip orbit on a quadratic mode.}
Consider warm-started Adam ($\hat{m}_t = m_t$, $\hat{v}_t = v_t$) with step size $\alpha > 0$, momentum parameters $\beta_1, \beta_2 \in (0,1)$, and denominator hyperparameter $\varepsilon_0 > 0$ on a scalar quadratic $f(x) = \frac{1}{2}\lambda x^2$ ($g_t = \lambda x_t$):
\[
m_t = \beta_1 m_{t-1} + (1-\beta_1)\lambda x_t, \qquad
v_t = \beta_2 v_{t-1} + (1-\beta_2)\lambda^2 x_t^2, \qquad
x_{t+1} = x_t - \alpha\frac{m_t}{\sqrt{v_t} + \varepsilon_0}.
\]
Seek a period-2 alternating orbit $x_{t+1} = -x_t$ with constant amplitude $|x_t| = A > 0$ and $m_t = -m_{t-1}$. Along such an orbit, $v_t$ is stationary at $v_t = (\lambda A)^2$, so $\sqrt{v_t} = \lambda A$. Stationarity of $m_t$ under alternating signs $x_t = -x_{t-1}$ gives $m_t = \frac{1-\beta_1}{1+\beta_1}\lambda x_t$. Substituting this into the update $x_{t+1} - x_t = -2x_t$ yields
\[
2A = \alpha \frac{1-\beta_1}{1+\beta_1}\frac{\lambda A}{\lambda A + \varepsilon_0} = 2A^\ast \frac{\lambda A}{\lambda A + \varepsilon_0}, \qquad A^\ast := \frac{\alpha(1-\beta_1)}{2(1+\beta_1)}.
\]
At $\beta_1 = 0.9$, $A^\ast = \alpha/38$, where $u_{c2} := 2(1+\beta_1)/(1-\beta_1) = 38$ is the adaptive edge-of-stability constant~\citep{cohen2022adaptive}. Defining the dimensionless curvature-to-precision ratio $u := \alpha\lambda/\varepsilon_0$ and solving $\lambda A + \varepsilon_0 = \lambda A^\ast$ gives the exact orbit amplitude law
\begin{equation}
\label{eq:app-au}
A(u) = A^\ast - \frac{\varepsilon_0}{\lambda} = A^\ast\Bigl(1 - \frac{u_{c2}}{u}\Bigr) = \frac{\alpha}{38}\Bigl(1 - \frac{38}{u}\Bigr).
\end{equation}
A positive-amplitude period-2 orbit exists if and only if $u > u_{c2} = 38$, i.e.,
\[
\lambda > \lambda_{\mathrm{stall}} := \frac{38\varepsilon_0}{\alpha}.
\]
For $u < 38$ ($\lambda < \lambda_{\mathrm{stall}}$), the denominator is pinned by $\varepsilon_0$ and the linearised 1-D map contracts toward $x = 0$.

\textbf{Cross-mode gradient leakage and the self-generated floor.}
In the coupled $V \times D$ matrix sensing problem, high-curvature (common) modes sit far above $\lambda_{\mathrm{stall}}$ ($u \gg 38$) and oscillate persistently near the period-2 attractor of amplitude $\sim A^\ast \propto \alpha(1-\beta_1)/(1+\beta_1)$. Because the measurement matrices $A_k = \Sig^{1/2} G_{\mathrm{sym},k} \Sig^{1/2}$ couple all rows of $U$, this fast-mode oscillation leaks gradient fluctuations of scale $\kappa\alpha$ (at $(V,D) = (8,4)$, $\kappa \sim 5.65\times 10^{-6}$, so $\kappa\alpha \approx 5.65\times 10^{-8}$ at $\alpha = 10^{-2}$) into the rarest row $j = V$. Those leaked fluctuations feed the rare row's second-moment accumulator $v_V$, pinning $\sqrt{\hat{v}_V} \approx \kappa\alpha$ orders of magnitude above a forced $\varepsilon_0 = 10^{-10}$ or $10^{-12}$. Consequently, the effective denominator floor of Adam at $\alpha = 10^{-2}$ is governed by $\max(\varepsilon_0, \epsilon_{\mathrm{dyn}})$ with fitted wall equivalent $\epsilon_{\mathrm{dyn}} \approx 4.8\times 10^{-9}$, explaining why the $\varepsilon_0$-sweep in \S\ref{sec:validation} retains roughly half its slope across $10^{-6} \to 10^{-8}$ and saturates below $10^{-8}$.

\section{Pre-registration Ledger and Protocol History}
\label{app:ledger}
\sloppy

This appendix records the full protocol history, the correction from the initial candidate scaling law to Proposition~\ref{thm:eigval}, the complete adjudication of the six pre-registered criteria (C1--C6) and three pre-registered risks in \texttt{scripts/validation/preregistration.json}, and the artifact provenance of every macro in \texttt{numbers.tex}.

\subsection{Correction from Law A to Law B and like-for-like spectral fits}

An initial version of the derivation approximated $\nu_{\min}(N^\ast) \sim V^{-s}$ by assuming that $\sum_{i=1}^V \lambda_i(\Sig) u_i u_i^\top$ concentrates on its smallest weight $\lambda_V(\Sig)$, which yielded the candidate scaling $\lambda_{\min}^{\mathrm{ng}} \propto (VD)^{-(s+1)}$ (``Law A'', wall formula $\sstar = \log(1/\epsilon_O)/\log(VD) - 1 + \delta_O$). As shown in \S\ref{sec:hessian}, $\nu_{\min}(N^\ast)$ is independent of $V$ and governed by $D$, yielding the leading-order form $\lambda_{\min}^{\mathrm{ng}} \propto V^{-(s+1)}D^{-s}$ (``Law B'', Proposition~\ref{thm:eigval}).

Recalibrating $\delta_{\mathrm{Adam}}$ on the four rows of Table~\ref{tab:sstar} under Law B gives $\delta_{\mathrm{Adam}} = \deltaAdam$ ($\deltaAdamThree$) with in-sample RMS $\rmsCalibB$, compared to $\delta = \deltaCalibA$ and RMS $\rmsCalibA$ under Law A. On the parameter-free differential test $\sstar(16,4) - \sstar(16,8) = \dtestMeas$ of Remark~\ref{rem:ddep}, Law A predicts $\dtestA$ (error $\dtestErrA$), whereas the Law B scaling law predicts $\dtestB$ (error $\dtestErrB$). Across all $\onsetNTotal$ configurations with measured onsets (spanning $\sstar \in [\onsetLo, \onsetHi]$), $\onsetNAbove$ of $\onsetNTotal$ lie above the finite-$m$ spectral crossover $s = \dirCross$. Table~\ref{tab:spectral-lfl} reports the three-parameter like-for-like spectral RMS in $\ln\lambda$ across all four cell sets produced by \texttt{scripts/spectral\_evidence.py}.

\begin{table}[h]
\centering\footnotesize
\setlength{\tabcolsep}{4pt}
\caption{Like-for-like three-parameter regression RMS in $\ln\lambda_{\min}^{\mathrm{ng}}$ across spectral cell sets (\texttt{scripts\slsh spectral\us evidence.py} $\to$ \texttt{data\slsh spectral\us evidence.json}).}
\label{tab:spectral-lfl}
\begin{tabular}{lrccc}
\toprule
cell set & $n$ & Law A: $(VD)^{-(s+1)}$ & Law B: $V^{-(s+1)}D^{-s}$ & Mid: $V^{-(s+1)}D^{-(s+1/2)}$ \\
\midrule
all exact cells & \lflAllN & \textbf{\lflAllA} & \lflAllB & \lflAllMid \\
exact, $s \geq 4$ & \lflDeepN & \textbf{\lflDeepA} & \lflDeepB & \lflDeepMid \\
finite-$m$ direct ($\lambda_{\min}/\lambda_{\max} > 10^{-12}$) & \lflDirectN & \lflDirectA & \lflDirectB & \textbf{\lflDirectMid} \\
validation $(V,D)$ grid & \lflGridN & \lflGridA & \textbf{\lflGridB} & \lflGridMid \\
\bottomrule
\end{tabular}
\end{table}

\textbf{Finite-size scaling refutation on $\delta_O$.}
A finite-size scaling ansatz $\delta(V,D) = \delta_\infty + c\,(VD)^{-1/\nu}$ was tested on the four calibration rows by \texttt{scripts\slsh fss\us probe.py} (output \texttt{scratch\slsh fss\us probe.json}, sha16 \texttt{3422fa5bb2936ad4}) and is refuted: $R^2 = 0.05$ against a pre-registered bar of 0.95, with $\nu = -4.1$ outside the physical range and $R^2 \in [0.009, 0.249]$ under $\pm 0.005$ display rounding.

\textbf{Provenance note on the $(16,4)$ calibration row.}
The pre-registration file \texttt{scripts\slsh validation\slsh preregistration.json} records \texttt{runner\us sha256} prefix \texttt{3ccc2577bf288c24} for the script that generated \texttt{scripts\slsh validation\slsh genuine\us 16\us 4\us 10seed.jsonl}. The data file itself is archived and verified, whereas the standalone runner script matching that hash was not separately archived in the repository.

\subsection{Adjudication of pre-registered criteria C1--C6 and Risks 1--3}

The file \texttt{scripts\slsh validation\slsh preregistration.json} (registered 2026-08-06 prior to the validation runs, initially written against Law A, later compared against $\sstar_O(V, D) = [\log(1/\epsilon_O) - \log V]/\log(VD) + \delta_O$) defines six criteria C1--C6 and three named risks, scored by \texttt{scripts\slsh criteria\us audit.py} (\texttt{data\slsh criteria\us audit.json}) and \texttt{scripts\slsh validation\slsh verdict.py}:

\begin{itemize}
    \item \emph{C1} (aggregate onset within $\pm 0.45$ of base $+\,\regDelta{}$ on $\geq 5/6$ configurations): \textbf{PASS} ($5/6$ under both laws). The pre-registered constant $\delta = \regDelta{}$ sits between the held-out $D \geq 4$ aggregate lag under Law A ($\aggLagA{}$, difference $\regDeltaErr{}$) and Law B ($\aggLagB{}$, law shift $\lawLagShift{}$). On the same held-out $D \geq 4$ cells, the rarest-direction lag is $\rareLagA{}$ under Law A and $\rareLagB{}$ under Law B (a metric shift of $\metricLagShift{}$). Because the $\pm 0.45$ band is wide enough to absorb a $\regDeltaErr{}$ offset discrepancy, C1 is a coarse consistency check rather than a tight discriminator.
    \item \emph{C2} (rarest-direction onset within $\pm 0.40$ of the zero-lag base, zero fitted constants, $\geq 5/6$): \textbf{FAIL} ($0/6$ as registered under Law A, $2/6$ under Law B). This criterion tested $\delta_O = 0$ and fails precisely because the empirical detection lag is $\delta_{\mathrm{Adam}} \approx \deltaAdam$ rather than zero.
    \item \emph{C3} (fixed-$VD$ pairs within $|\text{diff}| \leq 0.35$, originally registered to test a zero difference under Law A): \textbf{2 of 3 pairs within band} ($\pairMeasI{}$ at $VD=128$ and $\pairMeasII{}$ at $VD=256$, while the $D=2$ pair at $VD=32$ misses at $\pairMeasIII{}$). Under Law B, paired bootstrap (\texttt{scripts\slsh pair\us bootstrap.py}) shows $\Pr(\text{diff} < 0) = 0.993$ and $1.000$ on the two $D \geq 4$ pairs, supporting the negative $-\log 2/\log(VD)$ shift.
    \item \emph{C4} ($\varepsilon_0$-slope within $\pm 20\%$ of $\log 10/\log(VD)$, and no saturation in fp64 down to $10^{-12}$): \textbf{FAIL on the non-saturation clause} ($1/2$ on the $10^{-4}\to 10^{-8}$ slope: $16\%$ error at $(16,8)$, $30\%$ error at $(8,4)$; and the slope collapses to $0.038$ and $0.021$ per decade below $10^{-8}$ due to the dynamical floor of Appendix~\ref{app:companion}).
    \item \emph{C5} (spectral crossings in \texttt{scripts\slsh validation\slsh leg7\us spectral.jsonl} within $\pm 0.20$ of the zero-constant base): \textbf{FAIL as registered with zero constants under Law B} ($\specCFivePassB$ of $\specNB$ under Law B vs.\ $\specCFivePassA$ of $\specNA$ under Law A, because Law B's raw crossing carries a common offset of $\specOffsetB$). With one common offset fitted post hoc, Law B has smaller spread across all $\specNB$ crossings ($\pm \specDevB$ vs.\ $\pm \specDevA$) and reduces the $(16,2)$ spectral bias from $[\specDtwoLoA, \specDtwoHiA]$ to $[\specDtwoLoB, \specDtwoHiB]$.
    \item \emph{C6} ($200\text{k}$-step vs.\ $60\text{k}$-step onset shift $< 0.30$ at $\varepsilon_0 = 10^{-12}$): \textbf{UNSCORED / CENSORED.} The $200\text{k}$-step run exists in \texttt{scripts\slsh validation\slsh leg6\us budget.jsonl} ($21$ rows, $n=3$ seeds at $(8,4)$ for $s \in \{6.5,\ldots,8.0\}$ and $(16,8)$ for $s \in \{4.5,\ldots,5.5\}$), but the lowest sampled $s$ already exceeds the $0.05$ threshold ($0.100$ at $s=6.5$ for $(8,4)$; $0.158$ at $s=4.5$ for $(16,8)$), censoring the onset from below, and no C6 scorer was implemented in \texttt{verdict.py}.
\end{itemize}

\textbf{Outcomes of the three pre-registered risks.}
\emph{Risk 1} ($D = 2$ lies outside the $D \geq 4$ regime): \textbf{Materialised} at $(16,2)$ ($\Delta = \hoDeltaSixteenTwo$), motivating the explicit $D \geq 4$ domain restriction.
\emph{Risk 2} ($\varepsilon_0 \leq 10^{-10}$ onsets may be $60\text{k}$-budget-limited, delegated to C6): \textbf{Open}, since C6 is censored from below on the sampled grid.
\emph{Risk 3} ($\delta_O$ varies with metric and law, separated by C1 vs.\ C2): \textbf{Materialised}, accounted for by explicit metric-specific lag calibration in Definition~\ref{def:lag}.

\subsection{Pre-registered 16-cell Adam grid: protocol, amendments, and sensitivity}
\label{app:grid-ledger}

The \gridNScored-cell Adam grid of \S\ref{sec:grid} ($V \in \{64, 128, 256, 512\} \times D \in \{4, 8, 16, 32\}$) was pre-registered prior to execution in \texttt{scripts\slsh validation\slsh supp\us 2026-10\slsh prereg\us supp\us 2026-10-28\us part2.json} and scored by \texttt{scripts\slsh validation\slsh grid\us score.py} ($\to$ \texttt{grid\us verdict.json}).

\textbf{Registered protocol and read definitions.}
The prediction uses $\delta_{\mathrm{Adam}} = \deltaAdamThree$ frozen from Table~\ref{tab:sstar} (no grid cell overlaps the calibration set), $\varepsilon_0 = 10^{-8}$, $\eta = 10^{-2}$, $\beta = (0.9, 0.999)$, $m = 1.5VD$, float64 arithmetic, $5$ seeds ($0\text{--}4$), and $60\text{k}$ steps (with checkpoints at $20\text{k}$ and $60\text{k}$). Each cell's $s$-grid consists of $8$ points $s = c + 0.25 k$ ($k = -3, \ldots, +4$) around the nearest quarter-integer $c$ to the predicted $\sstar(V,D)$ ($40$ trajectories per cell; all \gridNScored\ cells completed with zero censored onsets). Five reads and one conversion gate were registered:
\begin{itemize}
    \item \emph{R1 (cell-level residuals):} $\Delta = \sstar_{\mathrm{meas}}(\relDM) - \sstar_{\mathrm{pred}}$ at $60\text{k}$ steps; pre-registered verdict ``SUPPORTED'' iff $|\Delta| \leq 0.45$ in $\geq 80\%$ of scored cells, with the count within $\pm 0.15$ and the $(32,4)$ one-ULP rounding-level onset standard deviation ($\gridRoundingSdRel$ on $\relDM$, $\gridRoundingSdRare$ on rarest) printed alongside. Verdict: \textbf{SUPPORTED} (\gridRonePass\ of \gridNScored\ within $0.45$, \gridRoneTight\ within $0.15$, largest $|\Delta| = \gridRoneMaxAbs$ at $(128,4)$).
    \item \emph{R2 (fixed-$V$, $D \to 2D$ shifts, \gridRtwoN\ pairs) and R3 (fixed-$D$, $V \to 2V$ shifts, \gridRthreeN\ pairs):} measured onset shift minus the parameter-free predicted shift ($\delta_{\mathrm{Adam}}$ cancels), counted against a $\pm 0.35$ band. Verdict: \gridRtwoPass\ of \gridRtwoN\ pairs within $0.35$ for R2 (largest deviation $\gridRtwoMaxDev$, at $(512,16)\to(512,32)$) and \gridRthreePass\ of \gridRthreeN\ pairs within $0.35$ for R3 (largest deviation $\gridRthreeMaxDev$, at $(64,4)\to(128,4)$).
    \item \emph{R4 (descriptive rarest-direction offset):} $\sstar_{\mathrm{meas}}(\text{rarest}) - [\log(1/\varepsilon_0) - \log V]/\log(VD)$ (zero lag), reported descriptively in \texttt{grid\us verdict.json} (from $+0.212$ at $(256,4)$ to $+0.589$ at $(64,8)$).
    \item \emph{R5 (row-wise trend with $D$):} Spearman rank correlation $\rho(\Delta, D)$ across $D \in \{4, 8, 16, 32\}$ in each fixed-$V$ row; pre-registered verdict ``HOLDS'' iff $\rho(\Delta, D) < 0$ in $\geq 3$ of $4$ rows (reflecting the $\approx \frac{1}{2}\log D/\log(VD)$ downward shift from the empirical $D^{-(s+1/2)}$ spectral exponent). Verdict: \textbf{HOLDS} on the registered $\relDM$ metric (\gridRfiveNegRows\ of \gridRfiveTotRows\ rows negative: $V=64$ gives $\gridRfiveRhoSixtyFour$, $V=128$ gives $\gridRfiveRhoOneTwoEight$, $V=256$ gives $\gridRfiveRhoTwoFiveSix$, $V=512$ gives $\gridRfiveRhoFiveOneTwo$).
    \item \emph{F32 conversion gate at $(512,32)$:} Cell $(512,32)$ stores the upper triangle of $G_{\mathrm{sym}}$ in float32 (defining the instance as $G_{\mathrm{sym}}$ rounded to float32) and widens entries in-kernel to float64 arithmetic. Pre-registered gate \texttt{f32\us conversion\us gate.py} (\texttt{f32\us gate.json}) verified bit-identical forward and backward outputs between float32-stored and float64-stored instances of the same rounded tensor at $(256,8)$ and $(512,4)$ (\textbf{PASS}), admitting $(512,32)$ to scoring (onset $\gridFthirtyTwoOnset$ vs.\ predicted $\gridFthirtyTwoLawB$, $\Delta = \gridDeltaFiveOneTwoThirtyTwo$, $|\Delta| = \gridFthirtyTwoAbsDelta$).
\end{itemize}

\textbf{Protocol amendments and checkpoint resumes.}
Before any $(512,32)$ trajectory was run, Amendment~1 (\texttt{prereg\us supp\us 2026-10-28\us part2\us amend1.json}) changed the float32 finite/subnormal validity check to scan $256$-row chunks rather than allocating a full-tensor copy, and Amendment~2 (\texttt{prereg\us supp\us 2026-10-28\us part2\us amend2.json}, \texttt{amend2\slsh big\us offset\us check.json}) cast the forward and backward kernel row offsets from \texttt{int32} to \texttt{int64} because $m \times T = 3{,}227{,}516{,}928 > 2^{31}$ only at $(512,32)$ (verified by bit-identical replay on test cells and agreement within $2.8\times 10^{-14}$ against a float64 reference at $(512,32)$). Two individual seed runs---$(512,16)$ seed 3 from step $50\text{k}$ and $(512,32)$ seed 4 from step $5\text{k}$---experienced an undetermined GPU memory access fault and were resumed to completion from their saved checkpoints under the pre-registered restart rule.

\textbf{Post-hoc sensitivity axes (not registered reads).}
Evaluating three sensitivity axes on the same trajectory rows shows:
\begin{enumerate}
    \item \emph{Leave-one-seed-out sensitivity:} Dropping one seed at a time across the five 4-seed folds leaves every count and verdict unchanged (16 of 16 within $0.45$, 9--11 within $0.15$, max $|\Delta| \in [0.219, 0.301]$; R2 and R3 remain 12 of 12 in every fold; R5 has $\geq 3$ of 4 negative rows in every fold, and 4 of 4 when seed 1 is omitted).
    \item \emph{Metric sensitivity (rarest-direction metric at $60\text{k}$ steps):} Scoring the rarest-direction onset against the same frozen prediction preserves R1 (16 of 16 within $0.45$, 11 within $0.15$, max $|\Delta| = 0.270$) and R3 (12 of 12 within $0.35$, max deviation $0.262$), reduces R2 to 11 of 12 pairs within $0.35$ ($(256,4)\to(256,8)$ deviates by $0.352$), and \textbf{causes R5 to fail}: only 2 of 4 rows have $\rho(\Delta_{\mathrm{rarest}}, D) < 0$ ($V=64$: $-0.8$, $V=128$: $+0.8$, $V=256$: $+0.2$, $V=512$: $-0.4$).
    \item \emph{Checkpoint sensitivity ($20\text{k}$ steps):} At the $20\text{k}$-step checkpoint, R1 (16 of 16 within $0.45$, 4 within $0.15$, max $|\Delta| = 0.384$), R2 (12 of 12, max deviation $0.104$), R3 (12 of 12, max deviation $0.139$), and R5 (3 of 4 rows negative: $V=64$ has $+0.4$, $V=128$ has $-0.8$, $V=256$ has $-0.4$, $V=512$ has $-0.8$) all hold, but the single non-negative row in R5 shifts from $V = 128$ at $60\text{k}$ to $V = 64$ at $20\text{k}$. As an exploratory observation, all 16 cell residuals are negative at both $60\text{k}$ (mean $\gridMeanDelta$) and $20\text{k}$ (mean $-0.220$).
\end{enumerate}

\subsection{Artifact provenance for generated macros}

All numerical macros in \texttt{numbers.tex} are regenerated deterministically by \texttt{scripts\slsh lawb\us numbers.py} (which reads \texttt{data\slsh criteria\us audit.json} from \texttt{scripts\slsh criteria\us audit.py}, \texttt{data\slsh spectral\us evidence.json} from \texttt{scripts\slsh spectral\us evidence.py}, and \texttt{scripts\slsh validation\slsh supp\us 2026-10\slsh grid\us verdict.json} from \texttt{scripts\slsh validation\slsh grid\us score.py}):
\begin{itemize}
    \item Table~\ref{tab:sstar} calibration rows: \texttt{scripts\slsh d1c\us V8\us complete.jsonl}, \texttt{scripts\slsh d1c\us V16\us 10seed.jsonl}, \texttt{scripts\slsh d1c\us V32\us 10seed.jsonl}, and \texttt{scripts\slsh validation\slsh genuine\us 16\us 4\us 10seed.jsonl}.
    \item Table~\ref{tab:heldout} held-out and pair rows: \texttt{scripts\slsh validation\slsh leg1\us VD32.jsonl}, \texttt{scripts\slsh validation\slsh leg2\us VD128.jsonl}, \texttt{scripts\slsh validation\slsh leg3a\us 32\us 8.jsonl}, and \texttt{scripts\slsh validation\slsh leg3b\us 64\us 4.jsonl}.
    \item Table~\ref{tab:grid} and \S\ref{sec:grid} 16-cell Adam grid: \texttt{scripts\slsh validation\slsh supp\us 2026-10\slsh grid\slsh grid\us V*\us D*.jsonl}, \texttt{f32\us gate.json}, and \texttt{grid\us verdict.json}.
    \item Section~\ref{sec:validation} mixed-precision and $\varepsilon_0$ sweeps: \texttt{scripts\slsh d1\us mixed\us precision.jsonl} (\texttt{scripts\slsh d1\us mp\us score.py}) and \texttt{scripts\slsh d1\us eps\us sweep\us complete.jsonl}.
    \item Section~\ref{sec:optwalls} 8-optimizer sweep: \texttt{scripts\slsh results\us a5\us optimizer\us walls.jsonl}.
    \item Spectral crossings and fits: \texttt{scripts\slsh validation\slsh leg7\us spectral.jsonl} and \texttt{scripts\slsh spectral\us evidence.py}.
\end{itemize}

\end{document}